\documentclass{article}

\usepackage{microtype}
\usepackage{graphicx}
\usepackage{subcaption}
\usepackage{booktabs} 

\usepackage{hyperref}

\usepackage[accepted]{icml2026}

\usepackage{amsmath}
\usepackage{amssymb}
\usepackage{mathtools}
\usepackage{amsthm}

\usepackage[capitalize,noabbrev]{cleveref}

\theoremstyle{plain}
\newtheorem{theorem}{Theorem}[section]

\newtheorem{lemma}[theorem]{Lemma}

\theoremstyle{definition}

\theoremstyle{remark}

\usepackage[textsize=tiny]{todonotes}

\icmltitlerunning{Stop When Further Reasoning Won't Help: Attention-State Adaptive Generation in Reasoning Models}

\usepackage{multirow}
\usepackage[table]{xcolor}
\usepackage{xspace}
\usepackage{pifont}
\newcommand{\cmark}{\ding{51}\xspace} 
\newcommand{\xmark}{\ding{55}\xspace} 
\newcommand{\ours}{ASAG\xspace}

\begin{document}

\twocolumn[
  \icmltitle{Stop When Further Reasoning Won't Help: \\Attention-State Adaptive Generation in Reasoning Models}
  \icmlsetsymbol{equal}{*}

  \begin{icmlauthorlist}
    \icmlauthor{Jiakai Li}{c1}
    \icmlauthor{Ke Qin}{c1,c2}
    \icmlauthor{Rongzheng Wang}{c1}
    \icmlauthor{Yizhuo Ma}{c1}
    \icmlauthor{Qizhi Chen}{c1}
    \icmlauthor{Muquan Li}{c1}
    \icmlauthor{Shuang Liang}{c1,c2}
  \end{icmlauthorlist}

  \icmlaffiliation{c1}{University of Electronic Science and Technology of China, Chengdu, China}
  \icmlaffiliation{c2}{Ubiquitous Intelligence and Trusted Services Key Laboratory of Sichuan Province, Chengdu, China}

  \icmlcorrespondingauthor{Shuang Liang}{shuangliang@uestc.edu.cn}

  \icmlkeywords{Large Language Models, ICML}

  \vskip 0.3in
]


\printAffiliationsAndNotice{}  

\begin{abstract}
By incorporating test-time compute scaling, large reasoning models (LRMs) can solve complex problems through explicit chain-of-thought (CoT) reasoning processes. 
However, they often suffer from overthinking, resulting in redundant token outputs and degraded accuracy. 
Current methods to mitigate this issue remain limited: training-based approaches require substantial computational resources, while training-free methods rely on well-crafted prompts or unreliable confidence signals.
In this work, we investigate early stopping from the perspective of attention distributions and propose a simple method, \ours, which infers the model's reasoning state and adaptively adjusts the generation strategy. 
The proposed framework is training-free and plug-and-play, enabling seamless integration into existing LRMs.
Extensive experiments on nine benchmarks demonstrate consistent improvements across mainstream LRMs with varying parameter scales, including the DeepSeek-R1-Distill and Qwen3 series.
Specifically, \ours improves average accuracy by 3.2\% while reducing the number of generated tokens by nearly 40\% across all reasoning tasks on Qwen3-8B. 
\end{abstract}

\section{Introduction}
Building on the advancements of large language models (LLMs), recent research has explored the development of large reasoning models (LRMs), e.g., DeepSeek-R1 \cite{deepseek}, GPT-O1 \cite{gpto1}, and Qwen3 \cite{qwen3}, thereby unlocking the potential for more complex reasoning tasks, such as mathematical reasoning \cite{gsm8k} and other specialized domains \cite{humaneval, hledata}. These LRMs integrate test-time compute scaling \cite{testtime} and reinforcement learning techniques, enabling multi-round reasoning before arriving at a solution. By reviewing and adjusting intermediate reasoning steps during the inference process, the models enhance the reliability and correctness of the answer.

Despite the widespread use of LRMs due to their strong problem-solving capabilities for complex tasks, they often produce excessively redundant and unnecessary reasoning chains, a phenomenon commonly referred to as ``overthinking'' \cite{overthinking, kimi}. This phenomenon substantially increases computational load and inference latency. Moreover, these redundant chains may also reduce accuracy, as the LRM deviates from the correct reasoning path, resulting in repetitive and incorrect reasoning steps that hinder the overall performance \cite{deer, cgrs}. 
To address ``overthinking'' in LRMs, current approaches have pursued three directions: training-based, prompt-based, and output-based methods. However, training-based methods incur substantial training costs, prompt-based methods struggle with task generalization, and output-based methods, although plug-and-play, rely solely on internal model confidence signals.

Existing output-based methods assume that higher model prediction confidence correlates with answer accuracy \cite{deer}. As illustrated in Figure \ref{fig:intro}, these methods typically follow a ``reasoning-probing-exit'' paradigm, where an early exit is triggered when the confidence at the probing stage exceeds a predefined threshold. 
However, these methods face significant limitations due to inherent challenges in LRMs: they often exhibit overconfidence on challenging problems \cite{hledata} and insufficient confidence on easy ones \cite{uncertainty}, resulting in incorrect termination decisions and degraded response quality.
We argue that the latent attentional dynamics of LRMs provide a more principled metric for evaluating internal reasoning states. This is inspired by recent advances in key-value cache eviction, where attention matrices serve as an information filter that selectively preserves tokens with high importance \cite{snapkv, kvzip}. From an information-theoretic perspective, the resulting attention distribution yields a dual signal: it reflects historical contextual dependencies while its entropy quantifies the uncertainty of the reasoning process. Integrating these internal attention metrics with external confidence allows for a multi-dimensional quantification of reasoning state, thereby facilitating more precise and adaptive generation decisions.
\begin{figure}[t]
\begin{center}
\begin{minipage}{\linewidth}
    \centering
    \includegraphics[width=\linewidth]{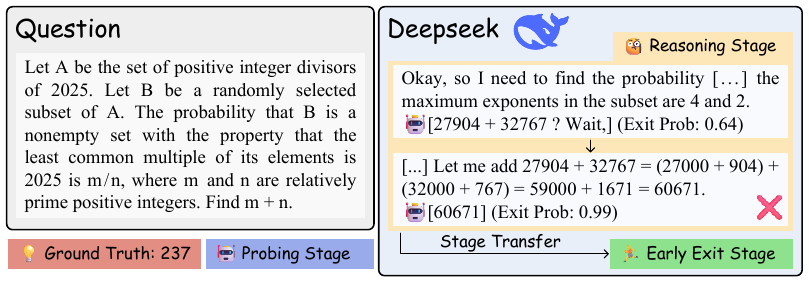}
    \\
    \small{(a) Overconfidence on challenging problems.}
\end{minipage}
\vfill
\centering
\includegraphics[width=\linewidth]{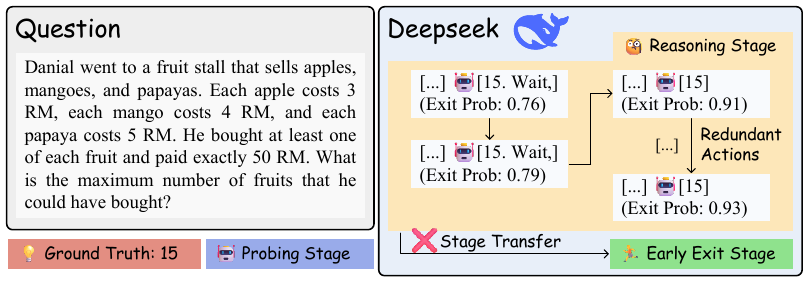}
\\
\small{(b) Insufficient confidence on easy problems.}
\caption{Two challenges of existing output-based methods. When setting an exit threshold of 0.95, (a) stops thinking incorrectly and (b) falls into redundant actions.}
\label{fig:intro}
\end{center}
\vskip -0.2in
\end{figure}

To this end, we propose \textbf{A}ttention-\textbf{S}tate \textbf{A}daptive \textbf{G}eneration (\textbf{\ours}), a training-free mechanism that adaptively adjusts reasoning strategies from the perspective of information flows. Specifically, \ours probes the model at action transition points (ATPs) and jointly computes model confidence and attention entropy to guide the LRM generation and determines when to terminate inference. Unlike prior output-based approaches, \ours leverages attention entropy to capture the stability of information flow, thereby alleviating premature termination caused by overconfidence on challenging problems and delayed exit due to insufficient confidence on easy ones. In addition, attention patterns are used to identify thinking traps, enabling lightweight interventions to redirect the LRM when necessary.

Our method is a training-free, output-based framework that adaptively adjusts the reasoning process, achieving a joint improvement in efficiency and accuracy. Extensive experiments on LRMs of varying parameter scales from the DeepSeek-R1-Distill and Qwen3 series show consistent performance gains across multiple reasoning benchmarks. Across most evaluations, \ours delivers higher accuracy while maintaining comparable or improved token efficiency compared to existing state-of-the-art methods.
Specifically, on Qwen3-4B and Qwen3-8B, \ours achieves absolute accuracy improvements of 2.9\% and 3.2\%, respectively, while reducing the number of generated tokens by nearly 37\% and 40\% across all benchmarks compared to vanilla LRMs.

\section{Related Work}
Benefiting from the test-time scaling methodologies, LRMs can leverage chain-of-thought (CoT) \cite{cot} reasoning to generate explicit, step-by-step reasoning sequences before arriving at a final answer. However, many LRMs tend to produce redundant reasoning steps, making them unable to provide answers within a specific token budget. In worse cases, excessive reasoning steps introduce errors or obscure logical clarity. To mitigate the overthinking problem, existing approaches can be classified into three categories: training-based methods, prompt-based methods and output-based methods.

\textbf{Training-based methods} use supervised fine-tuning (SFT) with variable-length CoT data or designing reward metrics related to reasoning length for reinforcement learning (RL). (1) SFT methods. C3oT \cite{c3ot} constructs a CoT compressor that reduces original CoTs into more concise formulations and trains the model to learn the relationships between CoTs of different lengths, enabling the model to generate shorter CoTs during inference. Yu et al. \cite{system2} transfer high-quality reasoning strategies into standard LLMs through self-supervised distillation, improving output quality while reducing inference cost and latency. VeriThinker \cite{verithinker} trains the model to verify the correctness of the CoT process, mitigating the ``overthinking phenomenon.'' (2) RL methods. Yang et al. \cite{demystifying} provide a comprehensive analysis of the mechanism underlying long CoT reasoning. DAST \cite{dast} employs budget-based reward shaping together with budget preference optimization, enabling LRMs to dynamically adjust the length of their CoT according to the problem difficulty. RSD \cite{rsd} pairs a lightweight draft model with a target model and introduces a process reward model to dynamically decide when to invoke the target model, thereby substantially reducing computational cost. 

\textbf{Prompt-based methods} aim to reduce reasoning length by designing well-crafted prompts. CoD \cite{cod} designs prompts that encourage the model to produce only the most essential and concise reasoning segments. TALE \cite{tale} dynamically adjusts the reasoning token budget according to problem difficulty, thereby reducing redundant reasoning outputs and overall computational cost.

\textbf{Output-based methods} determine whether to stop reasoning early based on the LRM's intermediate outputs or hidden states. Sun et al. \cite{bon} propose Speculative Rejection, a method that employs best-of-n decoding and dynamically filters candidate responses that are unlikely to achieve high scores during generation. Fu et al. \cite{certaindex} introduce Certaindex to measure the stability and certainty of a model's reasoning process: when the intermediate outputs remain consistent across several reasoning steps, it indicates that the model is confident of the answer and requires no further thinking tokens. DEER \cite{deer} evaluates the confidence of intermediate answers at action transition points (ATP) to determine whether early stopping can be applied.

\section{Motivations and Observations}
\label{sec:3}
\subsection{Preliminaries}
\textbf{Generation patterns:} Unlike traditional LLMs that directly generate an answer from the input, LRMs separate the generation process into two phases: slow thinking and conclusion. During the slow thinking stage, LRMs produce step-by-step reasoning traces, and then use thinking delimiters (e.g., \textit{\textless /think\textgreater}) to transition into the conclusion stage to generate the final answer. Recent research has shown that within the slow thinking phase, LRMs perform multiple reasoning actions, and the transition between these actions is indicated by action transition points (ATPs), marked by tokens such as ``wait'', ``hmm'' \cite{deer, deconstructing}. By evaluating the model's confidence in intermediate answers at these ATPs, one can determine whether early stopping is feasible, thereby improving reasoning efficiency.

\textbf{Query-key attention matrix:} Using the query-key attention matrix between the current decoding window and all global tokens, we define the normalized Shannon entropy as a metric for the dispersion of the internal attention information flow:
\begin{align}
    A_{h,l}^W &= \text{Softmax}(A_{h,l}^S,\text{dim}=-1), \\
    H_{h,l} &= - \frac{\sum\limits_{i=1}^{q} \sum\limits_{j=1}^k A_{h,l}^W[i,j] \log A_{h,l}^W[i,j]}{\log k},
    \label{eq:entropy}
\end{align}
where $A_{h,l}^S, A_{h,l}^W, H_{h,l}$ denote the attention score matrix, the weight matrix and the normalized entropy of the $h$-th head in the $l$-th layer, respectively; $q,k$ denote the length of the query and key matrix. From a probabilistic perspective, $A_{h,l}^W[i,j]$ quantifies the influence of the $j$-th token on the $i$-th token. We hypothesize that as the LRM converges toward a reliable conclusion, the entropy will exhibit a decrease. This phenomenon reflects a transition from diffuse exploratory attention to concentrated evidence-driven attention, where the model focuses more on a small number of critical tokens.
    
\textbf{Experimental settings:} To validate our hypothesis above, we investigate the model's reasoning state by examining the entropy variation during reasoning. To this end, we employ Qwen3-8B \cite{qwen3} in thinking mode. With an appropriate parameter scale, Qwen3-8B is among the most capable reasoning models available. We use a decoding temperature of $T$=0 and randomly sample 500 instances from the DAPO-MATH-17K dataset \cite{dapo}. We designate ``Wait'' as the ATP and probe intermediate answers by appending a prompt \textit{``\textbackslash n\textbackslash n Final Answer\textbackslash n\textbackslash n \textbackslash boxed''} to the current generation. For these 500 samples, we record the entropy variation rate and the correctness of the intermediate answer at each state transition, computed as follows:
\begin{align}
    H^{q_1} &= \sum\limits_{h=1}^N \sum\limits_{l=l'}^L H_{h,l}^{q_1}, \quad
    H^{q_2} = \sum\limits_{h=1}^N \sum\limits_{l=l'}^L H_{h,l}^{q_2}, 
    \label{eq:entropy_sum}    \\
    \Delta H &= \frac{H^{q_2}-H^{q_1}}{H^{q_1}},
    \label{eq:entropy_rate}
\end{align}
where $N,L$ are the number of heads and layers in the LRM; $q_1,q_2$ denote the token positions of the final decoding window at the previous and current monitoring steps, respectively; $l'$ equals $L-3$. We sum the entropy values across all attention heads in the last four model layers to obtain $H^{q_1}$ and $H^{q_2}$, and then compute the entropy variation rate to evaluate the model's reasoning state. More experimental details and explanations are provided in Appendix \ref{app:1}.
\begin{figure}[t]
\begin{center}
\begin{minipage}{.48\linewidth}
    \centering
    \includegraphics[width=\linewidth]{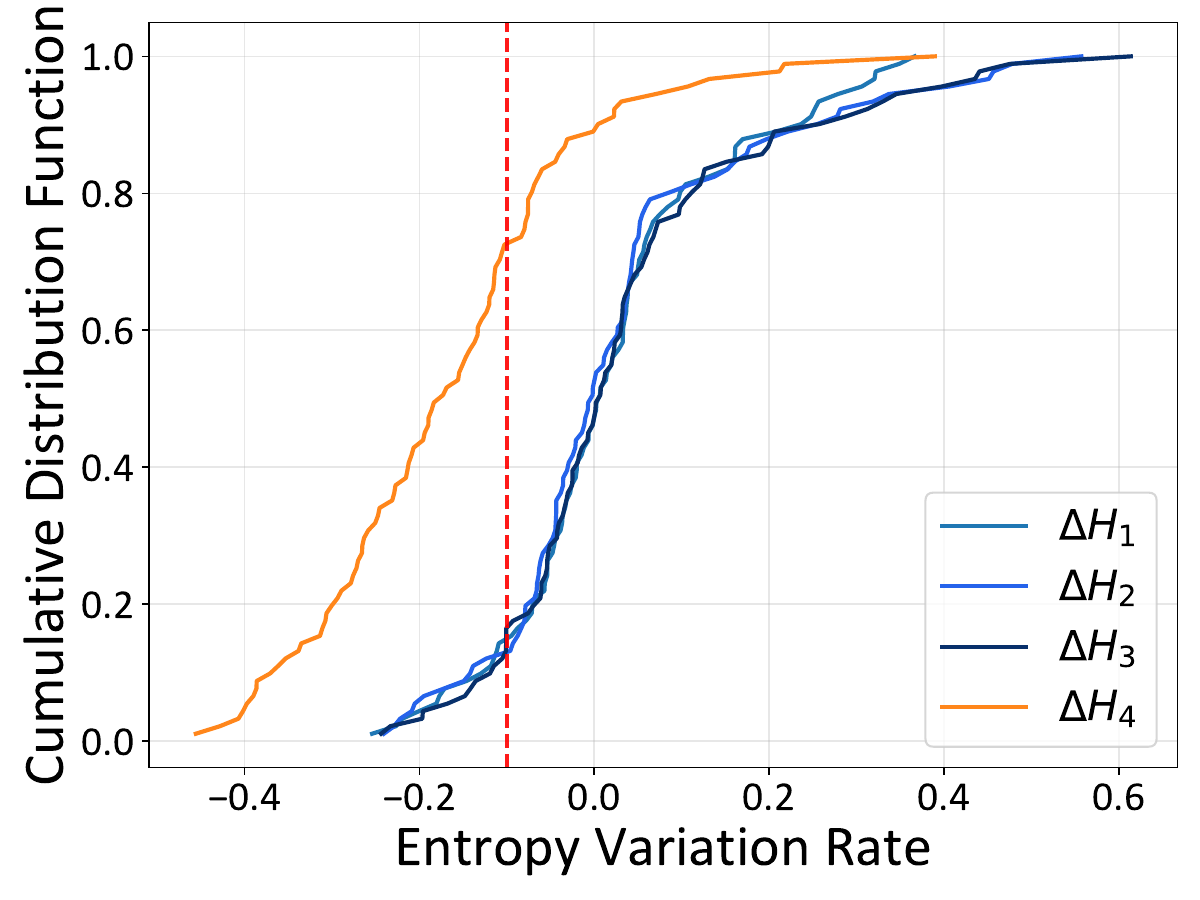}
    \\
    {\small (a)}
\end{minipage}
\hfill
\begin{minipage}{.48\linewidth}
    \centering
    \includegraphics[width=\linewidth]{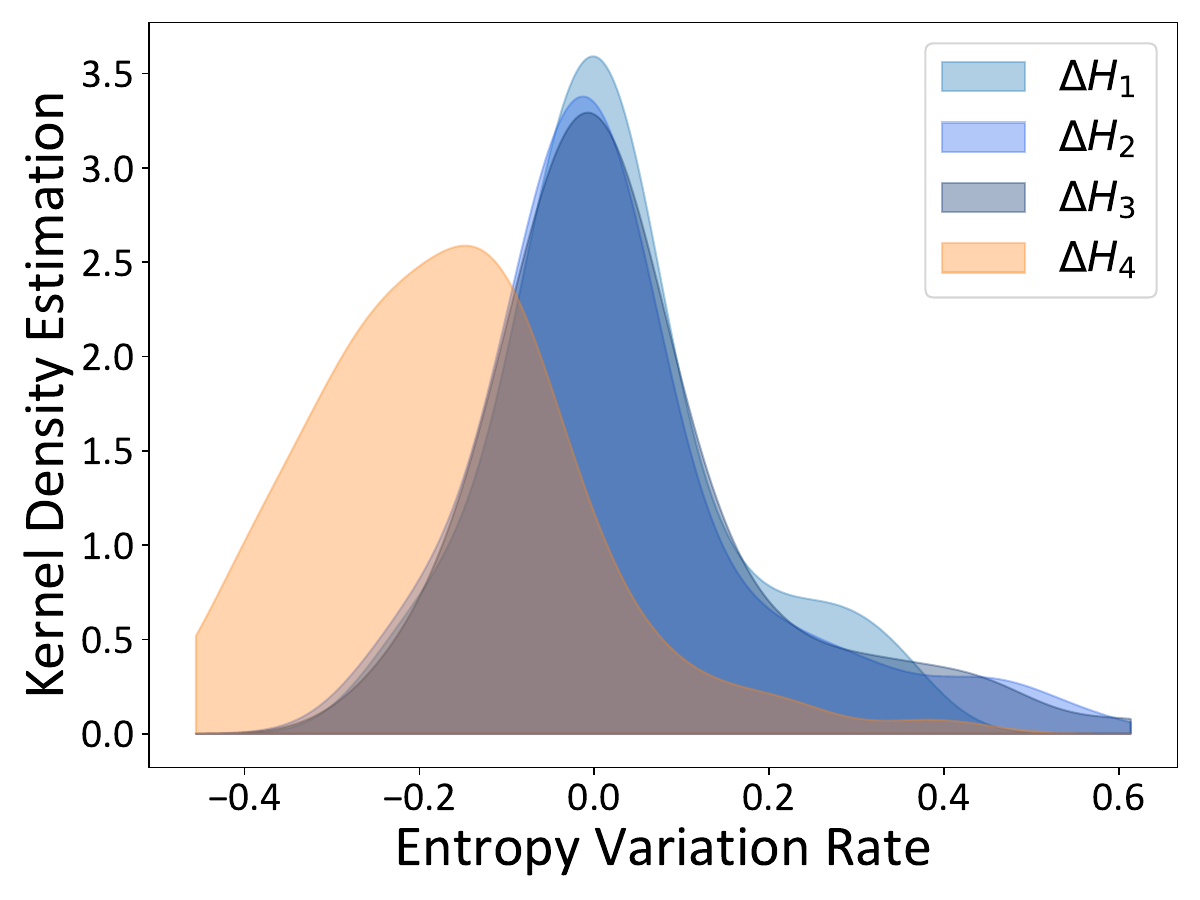}
    \\
    {\small (b)}
\end{minipage}
\caption{(a) The cumulative distribution function and (b) the kernel density estimation of the entropy variation rate $\Delta H$. $\Delta H_1, \Delta H_2, \Delta H_3, \Delta H_4$ denote the entropy variation rates between $H_1$ and $H_2$, $H_1$ and $H_3$, $H_1$ and $H_4$, $H_1$ and $H_f$.}
\label{fig:entropy}
\end{center}
\vskip -0.2in
\end{figure}

\begin{figure*}[t]
    \centering
    \includegraphics[width=\textwidth]{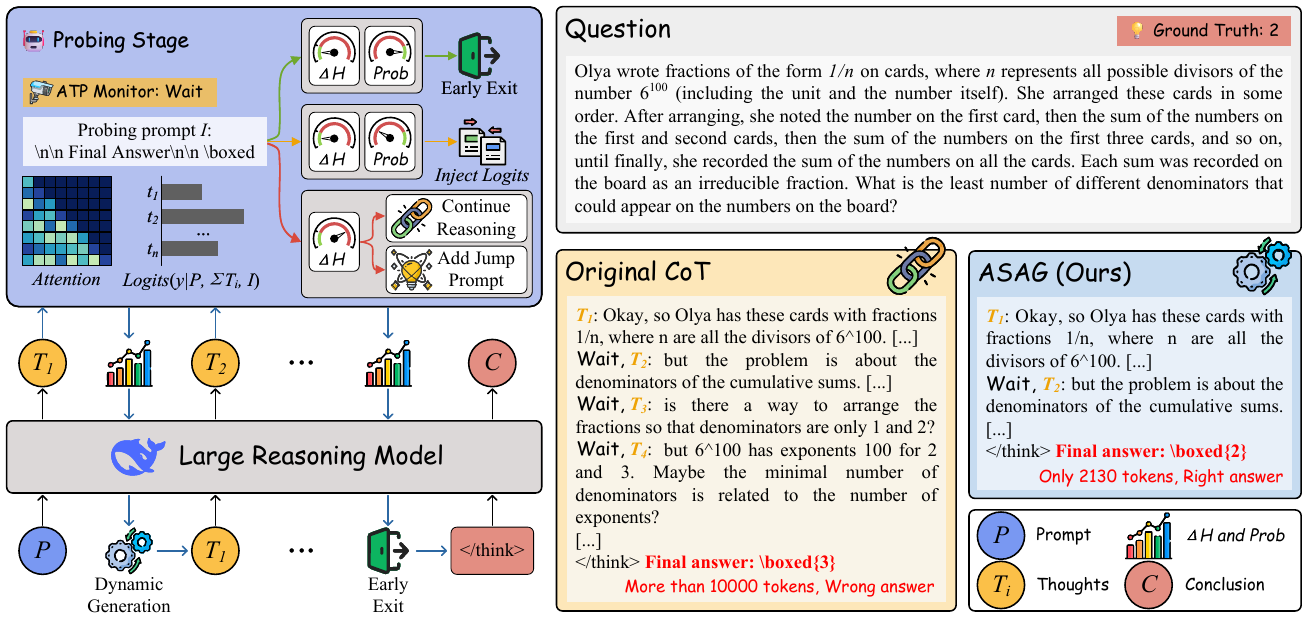}
    \caption{Overview of \ours. \ours adaptively determines subsequent generation strategies based on both attention dynamics and confidence signals, including attention-guided early exit, convergence-boosting logits injection, trap-escaping jump prompt intervention and vanilla-like generation.}
    \label{fig:framework}
\end{figure*}

\subsection{Entropy Analysis}
We select 274 samples with correct final answers. For each sample, we identify the state transition at which the LRM first derives a correct intermediate answer, and record the corresponding number of thinking tokens. The attention entropy at this transition is denoted as $H_f$. We then compute entropy values at four earlier stages, where the number of thinking tokens reaches 20\%, 40\%, 60\%, and 80\% of this amount, denoted as $H_1$-$H_4$, respectively. All entropy values are computed using Equations \eqref{eq:entropy} and  \eqref{eq:entropy_sum}. Finally, following Equation \eqref{eq:entropy_rate}, we compute the entropy variation rate $\Delta H$ between $H_1$ and each subsequent entropy value.

Figure \ref{fig:entropy} presents the cumulative distribution function (CDF) and the kernel density estimation (KDE) of the entropy variation rate $\Delta H$. The results show that when the model has not produced a correct answer, the attention entropy remains relatively high and stable. In contrast, once a correct answer is reached, it drops sharply. Notably, over 70\% of $\Delta H_4$ samples fall below $-0.1$, whereas only a minority of samples for the other variations are less than this threshold. This divergence suggests that monitoring internal entropy dynamics provides a robust signal for adaptive early termination.

\section{Methodology}

In this section, we present \textbf{A}ttention-\textbf{S}tate \textbf{A}daptive \textbf{G}eneration (\ours), a training-free framework that dynamically guides the LRM during generation and determines when to terminate the slow thinking phase. Our approach is based on the observation that when a model converges on a reliable intermediate answer, its internal attention distribution becomes more concentrated, leading to a significant reduction in attention entropy. By integrating both model confidence and latent attention dynamics, \ours provides a more precise assessment of the model's reasoning state beyond traditional confidence signals.

As illustrated in Figure \ref{fig:framework}, \ours operates through three stages: reasoning, probing, and early exit. The method iteratively probes the model at natural action boundaries, i.e., ATPs, and adaptively adjusts the subsequent generation policy based on the probing outcomes. Specifically, \ours dynamically selects from four generation strategies: attention-guided early exit, convergence-boosting logits injection, trap-escaping jump prompt intervention, and vanilla-like generation when no intervention is needed. Detailed pseudocode for \ours is provided in Algorithm \ref{alg:method} of Appendix \ref{app:2}.

\subsection{Attention-Guided Early Exit Criterion}
Given a prompt $P$, the model performs standard CoT-style generation until an ATP is detected (e.g., when the model encounters a ``wait'' token that marks the end of a thought action). At each ATP, the model enters the probing stage: we append a probing prompt \textit{``\textbackslash n\textbackslash n Final Answer\textbackslash n\textbackslash n \textbackslash boxed''} to the current generation to detect the intermediate answer. Formally, the intermediate answer $A=[a_1, a_2, \ldots, a_n]$ at this ATP is obtained as: $A=\text{LRM} (P,T,I)$, where $T$ represents the generated thoughts, $I$ denotes the probing prompt. The model confidence $\mathcal{C}$ is subsequently derived from the token probabilities of $A$:
\begin{equation}
    p(a_t)=\text{Softmax} (\mathcal{M}(P,T,I, a_{<t})), \quad \mathcal{C}=\frac{1}{n} \sum\limits_{i=1}^n p(a_i), 
    \label{eq:confidence}
\end{equation}
where $\mathcal{M}$ is the language model head at the final layer of the LRM and $p(a_t)$ denotes the model probability of selecting token $a_t$. The model confidence $\mathcal{C}$ is computed as the average of the token probabilities over all tokens in $A$. 

Then, to quantify the concentration of information flow, we compute the attention entropy across all heads in the last four layers of the LRM. Let $H_{h,l}$ be the entropy for head $h$ at layer $l$, as defined in Equation \eqref{eq:entropy}. We aggregate attention entropy values as:
\begin{equation}
    H = \sum\limits_{h=1}^N \sum\limits_{l=l'}^L H_{h,l},
    \label{eq:atp_entropy}
\end{equation}
where $N$ and $L$ denote the number of heads and layers, respectively; $l'$ equals $L-3$. For attention calculation, we define a decoding window to include all tokens from the probing prompt $I$ together with the same number of the most recent tokens from the generated thoughts $T$. This ensures a comprehensive view of the model's current reasoning state.

While DEER \cite{deer} triggers an early exit only when the model confidence $\mathcal{C}$ exceeds an empirical threshold $\lambda$, \ours refines this criterion by incorporating entropy variation. Let $H_1$ and $H$ represent the entropy values at the first ATP and a subsequent ATP, respectively, as calculated by Equation \eqref{eq:atp_entropy}. We define the entropy variation rate as $\Delta H=\frac{H-H_1}{H_1}$. Our early exit decision rules are as follows: at the first ATP, the LRM exits if $\mathcal{C} > \lambda$. At any subsequent ATP, an early exit is permitted only if $\mathcal{C} > \lambda$ and $\Delta H < \alpha$; otherwise, the reasoning process is deemed unstable, and the LRM continues generation. By integrating model confidence with attention entropy, \ours mitigates the risk of premature termination caused by model overconfidence on challenging tasks.

\subsection{Convergence-Boosting Logits Injection Strategy}
Furthermore, in cases where reasoning has converged despite low model confidence, i.e., $\mathcal{C} < \lambda$ and $\Delta H < \alpha$, we infer that the LRM has successfully captured the relevant evidence though the token-level confidence is low. 
While existing methods continue to follow the vanilla LRM reasoning trajectory until the confidence threshold is reached, which often results in overthinking and hesitation, as shown in Figure \ref{fig:intro}(b). In contrast, we aim to leverage the converged intermediate answers to guide subsequent generation, thereby shifting the conventional reasoning paradigm. Although one could directly modulate attention patterns using the extracted key information \cite{dsas,pai}, such approaches often incur substantial computational overhead. To this end, we propose a lightweight and easily integrated strategy that modifies the output logits, necessitating only minimal adjustments to the language model head $\mathcal{M}$. Specifically, to mitigate insufficient confidence on easy problems, we extract the normalized logits probability of the intermediate answer $a_r$ in $A$, denoted as $Logits_r$, and inject it to guide the generation process. The adjusted logits for the subsequent tokens are defined as follows:
\begin{equation}
    Logits = 0.95 \cdot \text{Softmax} (\mathcal{M}(P,T)) + 0.05 \cdot Logits_r.
    \label{eq:logits}
\end{equation}
This lightweight logits injection guides the LRM toward the converged intermediate answer, facilitating earlier commitment to correct conclusions.

\subsection{Trap-Escaping Jump Prompt Intervention}
Lastly, a large entropy variation rate $\Delta H$ at an ATP indicates that the LRM's reasoning process has not yet converged, and further generation is required. However, Ding et al. \cite{trap} point out that LRMs can become stuck in a thinking trap, where an incorrect initial reasoning path leads LRMs to persist in an unproductive CoT, ultimately failing to reach the correct solution. In such instances, simply continuing generation is ineffective; instead, the LRM must be redirected toward an alternative reasoning trajectory. 

Drawing inspiration from SBT \cite{breakprompt}, which suggests that explicitly prompting LRMs with breaking prompts can help interrupt reasoning and encourage early termination, we design a mechanism to identify these thinking traps and apply jump prompts $J$ to encourage the LRM to restart reasoning from a different perspective. Specifically, at each ATP, we construct a global attention weight matrix $A_{global}^W$ that characterizes the attention distribution of the current decoding window across all tokens:
\begin{equation}
    A_{global}^W = \frac{1}{N} \cdot \frac{1}{4}\sum\limits_{h=1}^N \sum\limits_{l=L-3}^L A_{h,l}^W .
    \label{eq:global_attn}
\end{equation}
If the average token-level attention allocated to the previous thought action $T_{i-1}$ exceeds that of the current thought action $T_i$, we regard this as a sign of a thinking trap, where the model repetitively focuses on prior reasoning without making substantive progress in reasoning. In such cases, we add a jump prompt $J$ \textit{``Wait, my previous reasoning is not correct. I should adopt a more concise and different approach to reexamine this problem.\textbackslash n\textbackslash n''} to the current generation to encourage a transition toward a different reasoning trajectory. Our empirical analysis suggests that jump prompts do not always induce a new perspective. This limitation may arise from deeply ingrained  reasoning biases or persistent insufficient confidence. To prevent meaningless computation, we establish a maximum threshold $s$ for jump attempts; once this limit is exceeded, the system terminates reasoning and triggers an early exit.

In summary, \ours evaluates the model's reasoning state using both model confidence and attention entropy.
This combination mitigates both overconfidence on challenging problems and insufficient confidence on easy ones.

\section{Experiments} \label{sec:5}

\begin{table*}[!t]
    \centering
    \caption{Comparison across LRMs of different scales and multiple baselines. The best and the second best results are highlighted in \textbf{bold} and \underline{underline}, respectively.}
    \resizebox{\linewidth}{!}{
    \begin{tabular}{l ccc ccc ccc ccc ccc|cc} 
    \toprule
    \multirow{2.5}{*}{\textbf{Method}} & \multicolumn{3}{c}{\textbf{GSM8K}} & \multicolumn{3}{c}{\textbf{MATH-500}} & \multicolumn{3}{c}{\textbf{AIME 2024}} & \multicolumn{3}{c}{\textbf{OlympiadBench}} & \multicolumn{3}{c}{\textbf{GPQA Diamond}}
    & \multicolumn{2}{c}{\textbf{AVG}} \\
    \cmidrule(lr){2-4} \cmidrule(lr){5-7} \cmidrule(lr){8-10} \cmidrule(lr){11-13} \cmidrule(lr){14-16} \cmidrule(lr){17-18}
     & Acc$\uparrow$ & Len$\downarrow$ & CR $\downarrow$ & Acc$\uparrow$ & Len$\downarrow$ & CR $\downarrow$ & Acc$\uparrow$ & Len$\downarrow$ & CR $\downarrow$ & Acc$\uparrow$ & Len$\downarrow$ & CR $\downarrow$ & Acc$\uparrow$ & Len$\downarrow$ & CR $\downarrow$ & Acc$\uparrow$ & CR $\downarrow$\\
    \midrule
    \multicolumn{18}{c}{{\cellcolor[rgb]{0.957,0.957,0.957}}\textit{\textbf{Qwen3-4B}}} \\
    Vanilla & 93.8 & 2,142 & 100\% & 92.4 & 4,910 & 100\% & 63.3 & 11,916 & 100\% & 59.0 & 8,915 &100\% & 46.5 & 9,277 & 100\% & 71.0 & 100\%  \\
    NoThinking & 89.6 & 624 & 29.1\% & 84.2 & 1,013 & 20.6\% & 23.3 & 6,458 & 54.2\% & 40.6 & 4,358 & 48.9\% & 36.4 & 1,738 & \text{18.7\%} & 54.8 & \textbf{\text{34.3\%}}\\
    TALE & 91.3 & 926 & 43.2\% & 87.6 & 2,827 & \text{57.6\%} & 60.0 & 11,469 & 96.2\% & 54.7 & 6,068 & 68.1\% & 41.9 & 2,620 & \text{28.2\%} & 67.1 & \underline{58.7\%}\\
    Dynasor & 92.9 & 938 & 43.8\% & 91.6 & 2,965 & \text{60.4\%} & 63.3 & 11,387 & 95.6\% & 63.6 & 6,863 & 77.0\% & 46.5 & 4,179 & \text{45.0\%} & \underline{71.6} & \text{64.4\%}\\
    DEER & 94.2 & 1,286 & 60.0\% & 92.6 & 3,197 & 65.1\% & 60.0 & 8,745 & 73.4\%& 62.9 & 7,344 & 82.4\%& 47.0 & 3,802 & 41.0\% & {71.3} & 64.4\%  \\
    \ours (\textbf{ours}) & 94.2 & 1,139 & 53.2\% & 92.8 & 3,325 & 67.7\% & 70.0 & 8,768 & 73.6\% & 64.6 & 7,536 & 84.5\%& 48.0 & 3,563 & 38.4\% & \textbf{73.9} & 63.5\%  \\
    \midrule
    \multicolumn{18}{c}{{\cellcolor[rgb]{0.957,0.957,0.957}}\textit{\textbf{Qwen3-8B}}} \\
    Vanilla & 94.6 & 2,338 & 100\% & 92.2 & 4,926 & 100\% & 63.3 & 12,101 & 100\% & 59.9 & 9,268 & 100\%& 52.5 & 9,382 & 100\%  & 72.5 & 100\%  \\
    NoThinking  & 91.3 & 685 & 29.3\% & 87.4 & 1,480 & \text{30.0\%} & 23.3 & 7,121 & 58.8\% & 48.7 & 5,219 & 56.3\% & 48.5 & 2,564 & \text{27.3\%} & 59.8 & \textbf{\text{40.3\%}}\\
    TALE & 92.5 & 1,142 & 48.8\% & 91.8 & 3,682 & \text{74.7\%} & 60.0 & 11,847 & 97.9\% & 56.1 & 7,306 & 78.8\% & 52.0 & 4,161 & \text{44.4\%} & {70.5} & {68.9\%}\\
    Dynasor & 94.2 & 1,024 & 43.8\% & 92.4 & 3,361 & \text{68.2\%} & 60.0 & 11,365 & 93.9\% & 62.4 & 7,640 & 82.4\% & 55.1 & 5,647 & \text{60.2\%} & 72.8 & \text{69.7\%}\\
    DEER & 95.0 & 1,096 & 46.9\% & 92.4 & 2,966 & 60.2\% & 63.3 & 8,930 & 73.8\%& 61.7 & 7,367 & 79.5\%& 54.5 & 5,334 & 56.9\%  & \underline{73.4} & \underline{63.5\%}  \\
    \ours (\textbf{ours}) & 95.9 & 1,063 & \text{45.5\%} & 93.0 & 3,152 & \text{64.0\%} & 66.7 & 8,683 & 71.7\% & 63.9 & 7,296 & \text{78.7\%} & 56.1 & 5,714 &  \text{60.9\%} & \textbf{75.1} & {64.2\%}\\
    \midrule
    \multicolumn{18}{c}{{\cellcolor[rgb]{0.957,0.957,0.957}}\textit{\textbf{Qwen3-14B}}} \\
    Vanilla & 95.1 & 2,082 & 100\% & 93.4 & 4,725 & 100\% & 70.0 & 10,537 & 100\% & 61.9 & 8,519 & 100\%& 58.6 & 7,688 & 100\%  & 75.8 & 100\%  \\
    NoThinking & 93.5 & 447 & 21.5\% & 88.0 & 1,403 & 29.7\% & 30.0 & 7,726 & 73.3\% & 50.4 & 5,462 & 64.1\% & 50.5 & 2,308 & 30.0\% & 62.5 & \textbf{43.7\%}  \\
    TALE & 94.7 & 963 & 46.3\% & 93.2 & 3,616 & \text{76.5\%} & 70.0 & 10,358  & 98.3\% & 58.4 & 7,250 & 85.1\% & 59.1 & 5,391 & \text{70.1\%} & 75.1 & \text{75.3\%}\\
    Dynasor & 95.6 & 1,364 & 65.5\% & 93.8 & 3,958 & 83.8\% & 70.0 & 11,156 & 105.9\% & 63.0 & 7,336 & 86.1\% & 59.6 & 5,387 & 70.1\% & 76.4 & 82.3\%  \\
    DEER & 95.3 & 863 & 41.5\% & 94.0 & 3,410 & 72.2\% & 73.3 & 8,402 & 79.7\%& 62.7 & 7,020 & 82.4\%& 60.6 & 4,998 & 65.0\%  & \underline{77.2} & \underline{68.2\%}  \\
    \ours (\textbf{ours}) & 96.2 & 1,039 & \text{49.9\%} & 95.0 & 3,505 & \text{74.2\%} & 73.3 & 7,602 & 72.1\% & 64.6 & 6,819 & \text{80.0\%} & 63.1 & 5,075 & \text{66.0\%} & \textbf{78.4} & \text{68.4\%}\\
    \midrule
    \multicolumn{18}{c}{{\cellcolor[rgb]{0.957,0.957,0.957}}\textit{\textbf{DeepSeek-R1-Distill-Qwen-7B}}} \\
    Vanilla & 89.5 & 1,452 & 100\% & 87.8 & 3,790 &100\% & 40.0 & 13,722 & 100\% & 48.6 & 8,347 & 100\% & 25.3 & 9,834 & 100\%  &58.2 & 100\%  \\
    NoThinking & 86.3 & 328 & 22.6\% & 80.0 & 1,167 & 30.8\% & 16.7 & 7,365 & 53.7\% & 37.3 & 3,140 & 37.6\% & 26.8 & 1,385 & 14.1\%   & 49.4 & \textbf{31.8\%}  \\
    TALE & 88.2 & 909  & 62.6\% & 88.6 & 2,738 & \text{72.2\%} & 33.3 & 11,627  & 84.7\% & 48.9 & 5,711  & 68.4\% & 26.3 & 4,883 & \text{49.7\%} & 57.1 & \text{67.5\%}\\
    Dynasor & 89.7 & 1,258 & 86.6\% & 89.4 & 2,994 & 79.0\% & 40.0 & 11,462  & 83.5\% & 50.0 & 5,973 & 71.6\% & 29.3 & 6,817 & 69.3\%   & 59.7 & 78.0\%  \\
    DEER & 90.3 & 967 & 66.6\% & 89.2 & 2,407 & 63.5\% &43.3 &8,669 &63.2\% &51.1 &5,420 & 64.9\%& 28.8 & 5,639 & 57.3\%  & \underline{60.5} & \underline{63.1\%}  \\
    \ours (\textbf{ours}) & 91.0 & 1,009 & \text{69.5\%} & 89.8 & 2,533 & \text{66.8\%} & 46.7 & 9,154 & 66.7\% & 52.3 & 5,325 & \text{63.8\%} & 31.8 & 5,584 & \text{56.8\%} & \textbf{62.3} & {64.7\%} \\
    \bottomrule
    \end{tabular}}
    \label{tab:main_results}
\end{table*}

\subsection{Experimental Setup} \label{sec:5.1}
\textbf{Benchmarks.} We evaluate our method on nine classic reasoning benchmarks, including six mathematical reasoning benchmarks: GSM8K \cite{gsm8k}, MATH-500 \cite{math500}, AMC 2023 \cite{amc2023}, AIME 2024 \cite{aime}, AIME 2025 \cite{aime} and OlympiadBench \cite{olympiad}, one scientific reasoning benchmark: GPQA Diamond \cite{gpqa}, and two code reasoning benchmarks: HumanEval \cite{humaneval}, LiveCodeBench \cite{livecode}. More benchmark details are provided in Appendix \ref{app:3.1}.

\textbf{Metrics.} To comprehensively evaluate the performance, we adopt three metrics. (1) pass@1 accuracy (Acc) measures the fraction of problems that the model solves correctly in a single generation. (2) We compute the average number of generated tokens (Len) to evaluate the inference-time reasoning cost. (3) Compression rate (CR) is defined as the ratio of the average response length to that of the vanilla baseline. Higher Acc, lower Len, and lower CR indicate better performance.

\textbf{Backbone LRMs.} Our evaluation covers various open-source models with varying parameter scales. Specifically, we consider the DeepSeek-R1-Distill series (including Qwen-7B and Llama-8B) \cite{deepseek}, and the Qwen3 series (including 4B, 8B, 14B, and 32B) \cite{qwen3}.

\textbf{Baselines.} We evaluate \ours against three categories of baselines: (i) Vanilla: standard decoding without any intervention; (ii) Prompt-based methods: including NoThinking \cite{nothinking} that skips intermediate reasoning for direct answer generation, and TALE \cite{tale} that constrains reasoning within a predefined token budget; and (iii) output-based methods, including Dynasor \cite{certaindex} that periodically probes intermediate answers at fixed token intervals and terminates generation when multiple consecutive answers are consistent, and DEER \cite{deer} that dynamically truncates generation by detecting high-confidence intermediate answers at ATPs such as ``Wait''.
 
\textbf{Implementations.} All models are deployed with the ``bfloat16'' data format due to the balance between efficiency and performance. We set the generation mode to greedy-decoding for all methods with deterministic decoding parameters: do\_sample=False, temperature=0, top\_p=1, max\_new\_tokens=16000. We set the hyperparameters $\lambda$ to 0.95, $\alpha$ to $-0.1$, jump attempts $s$ to 1.

\begin{table*}[!t]
    \centering
    \caption{Ablation study of \ours. w/o $Logits_r$ denotes removing the logits injection strategy in Equation \eqref{eq:logits}.}
    \resizebox{\linewidth}{!}{
    \begin{tabular}{l ccc ccc ccc ccc ccc|cc} 
    \toprule
    \multirow{2.5}{*}{\textbf{Method}} & \multicolumn{3}{c}{\textbf{GSM8K}} & \multicolumn{3}{c}{\textbf{MATH-500}} & \multicolumn{3}{c}{\textbf{AIME 2024}} & \multicolumn{3}{c}{\textbf{OlympiadBench}} & \multicolumn{3}{c}{\textbf{GPQA Diamond}}
    & \multicolumn{2}{c}{\textbf{AVG}} \\
    \cmidrule(lr){2-4} \cmidrule(lr){5-7} \cmidrule(lr){8-10} \cmidrule(lr){11-13} \cmidrule(lr){14-16} \cmidrule(lr){17-18}
     & Acc$\uparrow$ & Len$\downarrow$ & CR $\downarrow$ & Acc$\uparrow$ & Len$\downarrow$ & CR $\downarrow$ & Acc$\uparrow$ & Len$\downarrow$ & CR $\downarrow$ & Acc$\uparrow$ & Len$\downarrow$ & CR $\downarrow$ & Acc$\uparrow$ & Len$\downarrow$ & CR $\downarrow$ & Acc$\uparrow$ & CR $\downarrow$\\
    \midrule
    \multicolumn{18}{c}{{\cellcolor[rgb]{0.957,0.957,0.957}}\textit{\textbf{Qwen3-4B}}} \\
    \ours & 94.2 & 1,139 & 53.2\% & 92.8 & 3,325 & 67.7\% & 70.0 & 8,768 & 73.6\% & 64.6 & 7,536 & 84.5\%& 48.0 & 3,563 & 38.4\% & \textbf{73.9} & \textbf{63.5\%}  \\
    w/o $Logits_r$ & 94.0 & 1,206 & 56.3\% & 92.8 & 3,483 & 70.9\% & 67.7 & 9,463 & 79.4\%& 64.2 & 7,827 & 87.8\%& 48.5 & 3,760 & 40.5\% & {73.4} & 67.0\% \\
    \midrule
    \multicolumn{18}{c}{{\cellcolor[rgb]{0.957,0.957,0.957}}\textit{\textbf{Qwen3-8B}}} \\
    \ours & 95.9 & 1,063 & \text{45.5\%} & 93.0 & 3,152 & \text{64.0\%} & 66.7 & 8,683 & 71.7\% & 63.9 & 7,296 & \text{78.7\%} & 56.1 & 5,714 &  \text{60.9\%} & \textbf{75.1} & \textbf{64.2\%} \\
    w/o $Logits_r$ & 95.9 & 1,185 & \text{50.7\%} & 93.4 & 3,570 & \text{72.5\%} & 66.7 & 9,170 & \text{75.8\%} & 63.6 & 7,852 & \text{84.7\%} & 56.1 & 6,138 &  \text{65.4\%} & \textbf{75.1} & {69.8\%} \\
    \midrule
    \multicolumn{18}{c}{{\cellcolor[rgb]{0.957,0.957,0.957}}\textit{\textbf{Qwen3-14B}}} \\
    \ours & 96.2 & 1,039 & \text{49.9\%} & 95.0 & 3,505 & \text{74.2\%} & 73.3 & 7,602 & 72.1\% & 64.6 & 6,819 & \text{80.0\%} & 63.1 & 5,075 & \text{66.0\%} & \textbf{78.4} & \textbf{68.4\%} \\
    w/o $Logits_r$ & 96.1 & 1,074 & \text{51.6\%} & 94.4 & 3,772 & \text{79.8\%} & 73.3 & 8,774 & \text{83.3\%} & 64.3 & 7,144 & \text{83.9\%} & 63.1 & 5,099 & \text{66.3\%} & {78.2} & \text{72.3\%} \\
    \midrule
    \multicolumn{18}{c}{{\cellcolor[rgb]{0.957,0.957,0.957}}\textit{\textbf{DeepSeek-R1-Distill-Qwen-7B}}} \\
    \ours & 91.0 & 1,009 & \text{69.5\%} & 89.8 & 2,533 & \text{66.8\%} & 46.7 & 9,154 & 66.7\% & 52.3 & 5,325 & \text{63.8\%} & 31.8 & 5,584 & \text{56.8\%} & {62.3} & \textbf{64.7\%} \\
    w/o $Logits_r$ & 91.2 & 1,275 & \text{87.8\%} & 89.6 & 2,669 & \text{70.4\%} & 46.7 & 9,859 & \text{71.8\%} & 52.1 & 5,429 & \text{65.0\%} & 32.3 & 6,145 & \text{62.5\%} & \textbf{62.4} & {71.5\%}  \\
    \bottomrule
    \end{tabular}}
    \label{tab:abla}
\end{table*}

\begin{figure*}
    \centering
    \includegraphics[width=\textwidth]{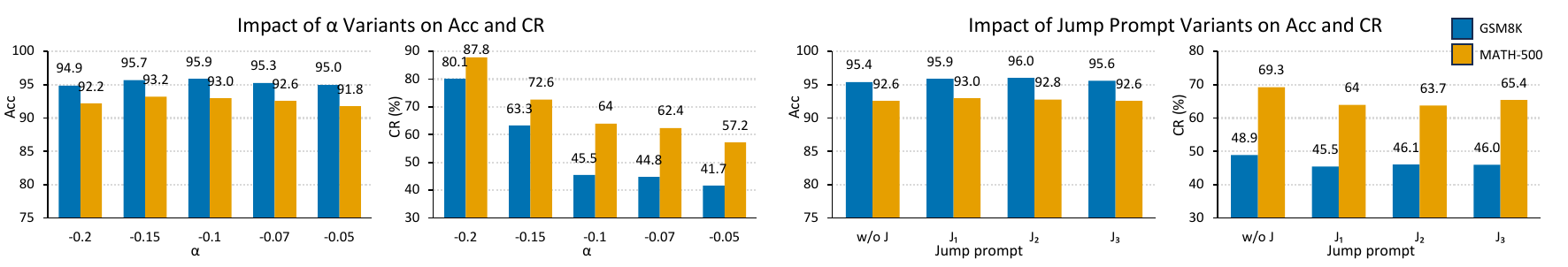}
    \caption{Impact of $\alpha$ variants (left) and jump prompt variants (right) on Acc and CR. Jump prompts $J_1, J_2, J_3$ are defined in Appendix \ref{app:3.2}, and additional results for Qwen3-14B are shown in Figure \ref{fig:abla1} in Appendix \ref{app:5}.}
    \label{fig:abla}
\end{figure*}
\subsection{Main Results}
Table \ref{tab:main_results} shows the main experimental results on four representative mathematical reasoning benchmarks and one scientific reasoning benchmark across four reasoning models. We also provide further results across other benchmarks covering DeepSeek-R1-Distill-Llama-8B and Qwen3-32B in Appendix \ref{app:4}. These results reveal four principal insights: 

(i) Compared to the vanilla baseline, \ours consistently improves accuracy while significantly reducing the length of generated tokens across most benchmarks and LRMs, without additional training. These results demonstrate the robustness and generalization of our approach, suggesting that \ours is agnostic to both model architectures and datasets. Specifically, on the five benchmarks reported in Table \ref{tab:main_results}, \ours yields the largest average performance gain on DeepSeek-R1-Distill-Qwen-7B, delivering a 4.1\% accuracy improvement with a compression ratio of 64.7\%.

(ii) Compared to prompt-based methods, although NoThinking achieves extreme compression by skipping the reasoning stage, it suffers from degraded performance, particularly on challenging benchmarks such as AIME 2024, OlympiadBench and GPQA Diamond. TALE consistently achieves accuracy comparable to the vanilla baseline with modest compression ratios. 
For the output-based methods, Dynasor triggers early exit based on answer consistency; however, this often leads to premature stopping and incomplete reasoning. DEER relies on the model's confidence signals and achieves the second-best performance on most benchmarks. Our method goes beyond these approaches by jointly considering model confidence and attention entropy, enabling more precise, adaptive, and robust guidance of the generation process.

(iii) On relatively simple benchmarks such as GSM8K and MATH-500, our method achieves remarkable compression ratios. Specifically, on Qwen3-8B, \ours attains compression ratios of 45.5\% and 64.0\% on these datasets, respectively, while maintaining extremely high accuracy. These results indicate that by injecting the logits from converged reasoning states, \ours enables the LRM to commit to reliable answers earlier, mitigating insufficient confidence on easy problems. On more challenging benchmarks such as AIME 2024, OlympiadBench, and GPQA Diamond, the compression gains are less pronounced. Nevertheless, \ours consistently yields the highest accuracy across these benchmarks. By jointly leveraging model confidence and attention entropy, we effectively alleviate the tendency of LRMs to exhibit overconfidence on challenging problems.

(iv) As shown in Table \ref{tab:main_results2} in Appendix \ref{app:4}, \ours consistently surpasses the vanilla baseline on code reasoning benchmarks, demonstrating strong versatility. We attribute this improvement to the observation that, in code reasoning, the critical solution often hinges on few key lines of code. Once the LRM generates these essential components, further reasoning becomes unnecessary, allowing early termination without compromising correctness.

\begin{table*}[!t]
    \centering
    \caption{Performance under the sampling setting. We conduct three sampling runs per instance and report the averaged results to ensure stability and reliability.}
    \resizebox{\linewidth}{!}{
    \begin{tabular}{l ccc ccc ccc ccc ccc|cc} 
    \toprule
    \multirow{2.5}{*}{\textbf{Method}} & \multicolumn{3}{c}{\textbf{GSM8K}} & \multicolumn{3}{c}{\textbf{MATH-500}} & \multicolumn{3}{c}{\textbf{AIME 2024}} & \multicolumn{3}{c}{\textbf{OlympiadBench}} & \multicolumn{3}{c}{\textbf{GPQA Diamond}}
    & \multicolumn{2}{c}{\textbf{AVG}} \\
    \cmidrule(lr){2-4} \cmidrule(lr){5-7} \cmidrule(lr){8-10} \cmidrule(lr){11-13} \cmidrule(lr){14-16} \cmidrule(lr){17-18}
     & Acc$\uparrow$ & Len$\downarrow$ & CR $\downarrow$ & Acc$\uparrow$ & Len$\downarrow$ & CR $\downarrow$ & Acc$\uparrow$ & Len$\downarrow$ & CR $\downarrow$ & Acc$\uparrow$ & Len$\downarrow$ & CR $\downarrow$ & Acc$\uparrow$ & Len$\downarrow$ & CR $\downarrow$ & Acc$\uparrow$ & CR $\downarrow$\\
    \midrule
    \multicolumn{18}{c}{{\cellcolor[rgb]{0.957,0.957,0.957}}\textit{\textbf{Qwen3-4B}}} \\
    Vanilla & 94.1 & 2,307 & 100\% & 92.6 & 5,082 & 100\% & 64.4 & 11,639 & 100\% & 59.8 & 9,172 &100\% & 46.5 & 9,130 & 100\% & 71.5 & 100\%  \\
    \ours & 94.0 & 1,158 & 50.2\% & 92.9 & 3,315 & 65.2\% & 70.0 & 8,861 & 76.1\% & 64.8 & 7,205 & 78.6\%& 48.1 & 3,728 & 40.8\% & \textbf{74.0} & \textbf{62.2\%}  \\
    \midrule
    \multicolumn{18}{c}{{\cellcolor[rgb]{0.957,0.957,0.957}}\textit{\textbf{Qwen3-8B}}} \\
    Vanilla & 95.6 & 2,276 & 100\% & 92.7 & 5,250 & 100\% & 63.3 & 12,383 & 100\% & 60.4 & 9,127 & 100\%& 52.9 & 9,344 & 100\%  & 73.0 & 100\%  \\
    \ours & 95.8 & 944 & \text{41.5\%} & 93.1 & 3,184 & \text{60.6\%} & 66.7 & 8,550 & 69.0\% & 64.3 & 7,443 & \text{81.5\%} & 55.9 & 5,893 &  \text{63.1\%} & \textbf{75.2} & \textbf{63.1\%}\\
    \midrule
    \multicolumn{18}{c}{{\cellcolor[rgb]{0.957,0.957,0.957}}\textit{\textbf{Qwen3-14B}}} \\
    Vanilla & 95.6 & 1,970 & 100\% & 93.9 & 4,838 & 100\% & 72.2 & 10,739 & 100\% & 62.3 & 8,691 & 100\%& 59.8 & 7,832 & 100\%  & 76.8 & 100\%  \\
    \ours & 96.3 & 1,063 & \text{54.0\%} & 94.8 & 3,554 & \text{73.5\%} & 72.2 & 7,874 & 73.3\% & 65.2 & 6,781 & \text{78.0\%} & 62.9 & 5,226 & \text{66.7\%} & \textbf{78.3} & \textbf{69.1\%}\\
    \midrule
    \multicolumn{18}{c}{{\cellcolor[rgb]{0.957,0.957,0.957}}\textit{\textbf{DeepSeek-R1-Distill-Qwen-7B}}} \\
    Vanilla & 89.7 & 1,407 & 100\% & 87.6 & 3,840 &100\% & 40.0 & 13,484 & 100\% & 48.7 & 8,467 & 100\% & 25.6 & 9,661 & 100\%  &58.3 & 100\%  \\
    \ours & 91.1 & 1,073 & \text{76.3\%} & 89.6 & 2,651 & \text{69.0\%} & 45.6 & 9,292 & 68.9\% & 52.6 & 5,136 & \text{60.7\%} & 31.7 & 5,469 & \text{56.6\%} & \textbf{62.1} & \textbf{66.3\%} \\
    \bottomrule
    \end{tabular}}
    \label{tab:sampling}
\end{table*}

\subsection{Ablation Study} \label{sec:5.3}
In this section, we conduct ablation studies to evaluate the effectiveness of each component in \ours. Since the confidence threshold $\lambda$ has been thoroughly analyzed in DEER \cite{deer}, we mainly focus our analysis on the novel components related to attention entropy. 

\textbf{Analysis of entropy threshold $\alpha$.} Building on the empirical observations in Section \ref{sec:3}, which associate reasoning convergence with a reduction in attention entropy, we introduce a threshold $\alpha$ to determine whether the reasoning process has converged. A smaller $\alpha$  enforces a stricter convergence criterion, requiring a more concentrated information flow before triggering an exit. As shown in Figures \ref{fig:abla} and \ref{fig:abla1} (left), when $\alpha$ is set to smaller values, e.g., $-0.2$, compression gains decrease significantly, as only a few samples exhibit such pronounced entropy drops, causing \ours to degenerate toward the vanilla baseline. Conversely, a larger $\alpha$ increases the risk of premature termination by mistaking minor local fluctuations in attention for logical convergence, yielding higher compression ratios at the expense of reasoning accuracy. Overall, our results reveal a ``sweet spot'' at $\alpha=-0.1$, which consistently delivers the most stable performance across models and benchmarks. Notably, this choice aligns with the empirical distribution of entropy variation identified in Section \ref{sec:3}, where over 70\% of correct reasoning steps fall below this threshold.

\textbf{Effect of injecting $Logits_r$.} As illustrated in Figure \ref{fig:framework}, when $\Delta H < \alpha$ and $\mathcal{C}>\lambda$, we regard the reasoning process as converged and guide subsequent generation via logits fusion. This design aims to strengthen the model's confidence in the converged answer and suppress unnecessary generation. Table \ref{tab:abla} reports the results obtained after removing the logits injection component. While the Acc remains comparable to that of \ours, the CR is substantially higher than \ours. Specifically, \ours achieves an absolute compression ratio improvement of 5.6\% and 6.8\% on Qwen3-8B and DeepSeek-R1-Distill-Qwen-7B, respectively. These results demonstrate that injecting logits effectively alleviates the issue of insufficient confidence on easy problems, enabling the model to reach the final answer earlier.

\textbf{Effect of adding jump prompts.} Jump prompts are designed to help the model escape thinking traps, preventing it from repeatedly following incorrect reasoning trajectories. Figures \ref{fig:abla} and \ref{fig:abla1} (right) report the performance under different jump prompt designs. The results show that incorporating jump prompts consistently improves both accuracy and compression ratio, demonstrating their effectiveness in steering the model out of thinking traps. This effect is particularly important because initial reasoning errors often cause excessively long and redundant generations. Furthermore, all jump prompts yield noticeable gains, indicating that \ours is robust to different jump prompt designs. 

\subsection{Results under Different Sampling Strategies} \label{sec:5.4}

To comprehensively evaluate the robustness of our method, we extend our experiments from deterministic greedy decoding to stochastic sampling settings. While most reasoning models currently adopt greedy decoding to produce deterministic outputs, evaluating \ours under sampling-based generation provides valuable insights into its practical applicability.
Table \ref{tab:sampling} presents the performance results under a sampling configuration with top-p=0.95 and temperature of \textit{T}=0.6 across five reasoning benchmarks. For each instance, we conduct three sampling runs and report the averaged results to ensure stability and reliability.
Across all LRMs, our method consistently achieves substantial token reduction while improving accuracy. Specifically, on Qwen3-4B, \ours reduces generated tokens by 37.8\% while improving the average accuracy from 71.5\% to 74.0\%. On Qwen3-8B, \ours achieves an average token reduction of 36.9\% together with a 2.2\% improvement in average accuracy. The consistent gains across different LRMs and benchmarks highlight the strong generalizability of our approach.

These findings confirm that the adaptive generation strategy of \ours extends beyond greedy decoding and generalizes effectively to sampling configurations. The ability to maintain robust performance under sampling further suggests that models' internal attention entropy serves as a reliable indicator of reasoning state regardless of the underlying decoding strategy. This versatility makes \ours suitable for diverse application scenarios that require a balance between output diversity and reasoning efficiency.

\subsection{Further Discussions}

\begin{figure}[t]
    \centering
    \includegraphics[width=\linewidth]{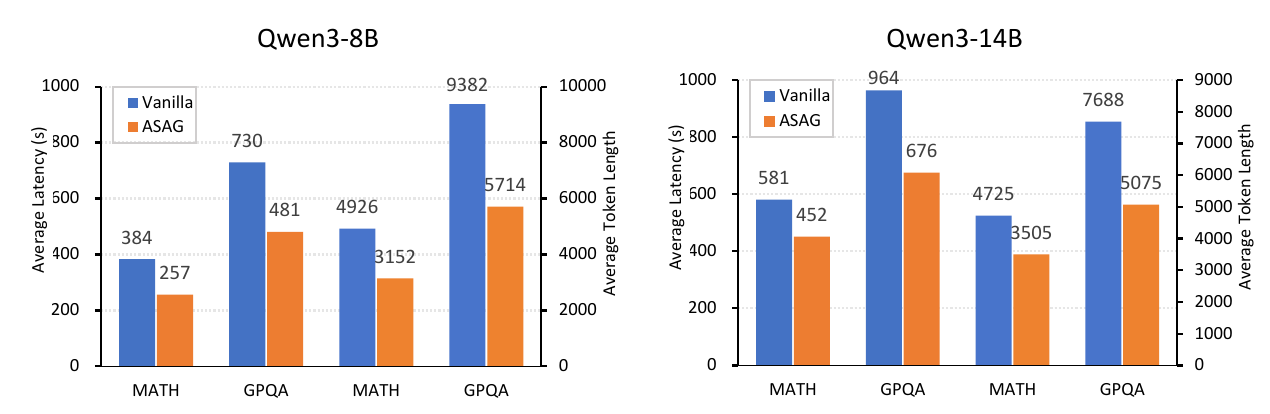}
    \caption{Efficiency analyses of the vanilla method and \ours on MATH-500 and GPQA Diamond benchmarks. \ours achieves consistent improvement on both end-to-end latency and generated token length.}
    \label{fig:efficiency}
\end{figure}
\textbf{Efficiency analysis.} To evaluate the practical efficiency of \ours, we measure end-to-end latency on MATH-500 and GPQA Diamond using a single NVIDIA A800 80G GPU. As shown in Figure \ref{fig:efficiency}, while \ours introduces extra forward passes during probing at each Action Transition Point (ATP), this computational overhead is negligible compared to the massive reduction in generated tokens. The probing stage accounts for less than 10\% of total inference time. As a result, \ours achieves a notable end-to-end speedup of approximately 1.51$\times$ on Qwen3-8B. Furthermore, the early-stop mechanism effectively limits the growth of the Key-Value cache, alleviating memory pressure in complex reasoning. Overall, \ours improves accuracy while offering a better trade-off between reasoning depth and computational cost, making it highly suitable for latency-sensitive deployment.

\begin{figure}[t]
    \centering
    \includegraphics[width=\linewidth]{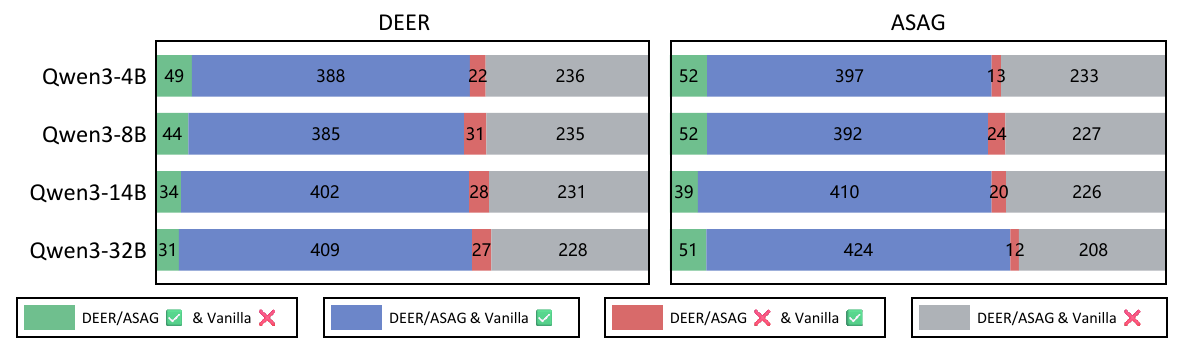}
    \caption{Analysis of OlympiadBench results on DEER and \ours. \cmark and \xmark denote samples with correct and wrong answers.}
    \label{fig:error}
\end{figure}
\textbf{Error analysis.} We conduct an error analysis to examine ASAG's effectiveness in alleviating overconfidence and insufficient confidence. First, regarding overconfidence on challenging problems, we analyze samples from OlympiadBench, a high-difficulty benchmark, and adopt the state-of-the-art confidence-based method DEER \cite{deer} as the baseline. As shown in Figure \ref{fig:error}, the number of samples incorrectly answered by both DEER and the vanilla model is comparable to that of both \ours and the vanilla model (grey part), suggesting that these cases are beyond the base capacity of the LRM. 
However, a clear performance gap appears in cases where only the vanilla model produces the correct answer (red part). In these instances, DEER often triggers a premature exit due to superficial initial reasoning and unreliable confidence signals. 
In contrast, \ours recognizes high attention entropy as an unconverged reasoning state and enforces continued generation even under high confidence. This enables the LRM to complete necessary reasoning steps that confidence-based methods would truncate, thereby narrowing the gap between internal certainty and logical convergence.

\textbf{Performance with different ATP choices.}
\begin{table}[h]
    \centering
    \small
    \caption{Results with different ATP choices. All experiments are conducted on Qwen3-8B. ASAG(W) and ASAG(A) denote \ours with ``Wait" and ``Alternatively" as ATP signals, respectively. Chunk Size denotes the average number of generated tokens within a single reasoning chunk, and Exit Acc denotes the answer accuracy for early exit instances.}
    \begin{tabular}{lcccc} 
    \toprule
    Method & Acc & Len & Chunk Size & Exit Acc \\
    \midrule
    \multicolumn{5}{c}{{\cellcolor[rgb]{0.957,0.957,0.957}}\textit{\textbf{MATH-500}}} \\
    Vanilla & 92.2 & 4,926 & - & - \\
    ASAG(W) & 93.0 & 3,152 & 316.6 & 96.3\% \\
    ASAG(A) & 92.8 & 3,220 & 784.5 & 94.8\% \\
    \midrule
    \multicolumn{5}{c}{{\cellcolor[rgb]{0.957,0.957,0.957}}\textit{\textbf{AIME 2024}}} \\
    Vanilla & 63.3 & 12,101 & - & - \\
    ASAG(W) & 66.7 & 8,683 & 473.8 & 100\% \\
    ASAG(A) & 66.7 & 9,012 & 952.4 & 93.3\% \\
    \bottomrule
    \end{tabular}
    \label{tab:atp}
\end{table}

In the main experiments, we use ``Wait" as the ATP for early exit, which yields satisfactory results. To further evaluate the robustness of \ours to ATP selection, we conduct additional experiments using ``Alternatively" as the ATP. Table \ref{tab:atp} presents the results of \ours on MATH-500 and AIME 2024 benchmarks under these two ATP configurations on Qwen3-8B.
ASAG(W) achieves accuracy improvements of 0.8\% and 3.4\%, together with reductions of 36.0\% and 28.2\% in generated tokens on MATH-500 and AIME 2024, respectively. Similarly, ASAG(A) maintains competitive performance, improving accuracy by 0.6\% and 3.4\% while reducing token generation by 34.6\% and 25.5\% on the two benchmarks. Both configurations consistently outperform vanilla generation.
In addition, across benchmarks of varying difficulty, both ATP choices exhibit consistent patterns: token generation increases with task complexity, while exit accuracy remains stable. This observation further shows that \ours can adaptively adjust its generation strategy according to reasoning complexity without relying on specific ATP choices.
We attribute this robustness to two main factors. First, these markers share similar semantic functions in indicating reasoning state transitions. Second, \ours relies on internal attention entropy, which provides a universal reasoning state measurement independent of specific linguistic markers, thereby ensuring stable performance.

\section{Conclusion}
We present Attention-State Adaptive Generation (\ours), a training-free, plug-and-play framework that mitigates two limitations in LRMs. By leveraging latent attention dynamics, \ours provides a principled measure of a model's reasoning state beyond traditional confidence signals. Extensive evaluations on nine benchmarks show that \ours improves accuracy while maintaining efficiency, reducing generated tokens by 40\% on Qwen3-8B. Our results demonstrate that monitoring internal information flows is crucial for building more efficient and reliable reasoning agents.  

\section*{Acknowledgments}
This work is supported by the Noncommunicable Chronic Diseases-National Science and Technology Major Project No.2023ZD0501806, National Natural Science Foundation of China No.62406057, the Fundamental Research Funds for the Central Universities No.ZYGX2025XJ042, and the Sichuan Science and Technology Program under Grant No.2024ZDZX0011.

\section*{Impact Statement}

This work introduces \ours, a method designed to enhance the efficiency and reliability of large reasoning models (LRMs). The broader impacts of \ours are multi-faceted. First, from an environmental and economic perspective, \ours reduces token generation by nearly 40\% on Qwen3-8B, directly lowering computational costs, energy consumption, and carbon footprint of large-scale AI deployments while democratizing access for resource-constrained researchers. Second, regarding AI safety and reliability, \ours mitigates the overthinking and overconfidence issues in current LRMs through adaptive generation mechanisms. This enables more self-aware AI systems that better estimate reasoning progress, which is crucial for decision-sensitive domains like education and scientific research where reasoning accuracy and transparency are essential. We do not foresee any negative social impacts from this work, as it primarily serves as a general-purpose optimization for existing model architectures.




\bibliography{main}
\bibliographystyle{icml2026}

\newpage
\appendix
\onecolumn

\section{Details in Section \ref{sec:3}} \label{app:1}

In Equation \eqref{eq:entropy_sum}, $q_1$ and $q_2$ denote the token positions of the decoding window at the previous and current monitoring steps, respectively. We set the window length to 32 to obtain a more reliable estimation of the model's attention dynamics. In addition, $l'$ in Equation \eqref{eq:entropy_sum} is set to $L-3$. This design is motivated by previous studies on information flow in LLMs \cite{label_anchor, iap, dsas}, which show that attention in shallow layers tends to be more dispersed, corresponding to information aggregation and input understanding, whereas attention in deeper layers becomes more concentrated, reflecting information extraction and the enhancement of key information for response generation. Therefore, we compute the attention entropy $H$ only over last four model layers, where reasoning convergence is characterized by focused attention rather than the dispersed patterns typically observed in earlier layers.

\section{Pseudocode of \ours} \label{app:2}
\begin{algorithm}[h]
\caption{Attention-State Adaptive Generation (\ours)}
\label{alg:method}
\begin{algorithmic}
\STATE {\bfseries Initialization:} Large Reasoning Model $\text{LRM}(\cdot)$, zero-shot-CoT $zs\_cot$, $\text{question}$, probing prompt $I$, jump prompt $J$, set of action transition points $\mathbb{P}$, end-of-thinking delimiter $\langle\text{/think}\rangle$, maximum length $max\_len$, confidence threshold $\lambda$, entropy variation threshold $\alpha$, maximum jump attempts $s$.
\STATE ${\textbf{x}} \gets zs\_cot + \text{question}$, ${\textbf{r}} \gets []$, $H_1 \gets \text{None}$, $\Delta H \gets \text{None}$, $jump\_count \gets 0$
\WHILE{$len({\textbf{x}}) < max\_len$}
\STATE $y \gets \text{LRM}({\textbf{x}})$
\IF {$y \in \mathbb{P}$} 
\STATE ${A} \gets \text{LRM}({\textbf{x}} + {I})$ 
\STATE Compute model confidence $\mathcal{C}$ and attention entropy $H$ according to Equation \eqref{eq:confidence} and \eqref{eq:atp_entropy}. 
\IF {$H_1 = \text{None}$}
\STATE $H_1 \gets H$
\ELSE
\STATE $\Delta H \gets \dfrac{H - H_1}{H_1}$ 
\ENDIF
\IF {($\Delta H = \text{None}$ \textbf{and} $\mathcal{C} > \lambda$) \textbf{or} ($\Delta H < \alpha$ \textbf{and} $\mathcal{C} > \lambda$)} 
\STATE ${\textbf{x}} \gets {\textbf{x}} + \langle\text{/think}\rangle$, ${\textbf{r}} \gets {\textbf{r}} + \langle\text{/think}\rangle$
\ELSIF {$\Delta H < \alpha$ \textbf{and} $\mathcal{C} < \lambda$} 
\STATE Convergence-boosting logits injection following Equation \eqref{eq:logits}
\ELSIF {$\Delta H > \alpha$} 
\STATE Compute global attention matrix $A_{global}^W$ following Equation \eqref{eq:global_attn}
\IF {$\frac{1}{T_{i-1}}A_{global}^W\left[\ldots,T_{i-1}\right]>\frac{1}{T_{i}}A_{global}^W\left[\ldots,T_{i}\right]$} 
\IF {$jump\_count \le s$}
\STATE ${\textbf{x}} \gets {\textbf{x}} + J$, ${\textbf{r}} \gets {\textbf{r}} + J$, $jump\_count \gets jump\_count + 1$ 
\ELSE
\STATE ${\textbf{x}} \gets {\textbf{x}} + \langle\text{/think}\rangle$, ${\textbf{r}} \gets {\textbf{r}} + \langle\text{/think}\rangle$ 
\ENDIF
\ELSE 
\STATE ${\textbf{x}} \gets {\textbf{x}} + y$, ${\textbf{r}} \gets {\textbf{r}} + y$ 
\ENDIF
\ENDIF
\ELSE
\STATE ${\textbf{x}} \gets {\textbf{x}} + y$, ${\textbf{r}} \gets {\textbf{r}} + y$ 
\ENDIF
\ENDWHILE 
\STATE \textbf{return} \textbf{r}
\end{algorithmic}
\end{algorithm}

\section{Symbol Explanation} \label{app:symb}
Table \ref{tab:symbol} provides a comprehensive list of symbols used in this paper and their corresponding descriptions.
\begin{table}[H]
    \centering
    \caption{Symbols and Descriptions.}
    \begin{tabular}{l|l}
    \toprule
    Symbol & Description \\  \midrule
    $A^S_{h,l}$ & the attention score matrix of the $h$-th head in the $l$-th layer \\
    $A^W_{h,l}$ & the attention weight matrix of the $h$-th head in the $l$-th layer \\
    $H_{h,l}$ & the normalized Shannon entropy value of the $h$-th head in the $l$-th layer \\
    $H, \Delta H$ & the aggregated entropy value and the entropy variation rate \\
    $N, L$ & the number of heads and layers in the model \\
    $A$ & the probing intermediate answer $[a_1, a_2, \ldots, a_n]$ \\
    $\mathcal{C}$ & the model confidence computed by token probabilities \\
    $P, T$ & the input prompt and the generated thoughts \\
    $I, J$ & the probing prompt and the jump prompt \\
    $Logits_r$ & the normalized logits probability of the intermediate answer in $A$ \\
    $A_{global}^W$ & the global attention weight computed following Equation \eqref{eq:global_attn} \\
    \bottomrule
    \end{tabular}
    \label{tab:symbol}
\end{table}

\section{Details of Experimental Setup in Section \ref{sec:5}} \label{app:3}

\subsection{Benchmarks} \label{app:3.1}
We conduct systematic and comprehensive evaluations on six mathematical reasoning, one scientific reasoning, and two code reasoning benchmarks. 

Among the mathematical reasoning benchmarks, GSM8K \cite{gsm8k}, MATH-500 \cite{math500}, and AMC 2023 \cite{amc2023} are relatively simple, on which many LRMs achieve reasoning accuracies beyond 80–90\%. In contrast, AIME 2024 \cite{aime}, AIME 2025 \cite{aime}, and OlympiadBench \cite{olympiad} are considerably more challenging. 
\begin{itemize}
    \item GSM8K consists of 1,319 grade-school–level math word problems with natural language descriptions. Each problem requires the model to extract key information from the textual question and derive a final numerical answer through multi-step arithmetic and logical reasoning. The problems cover basic mathematical concepts such as addition, subtraction, multiplication, division, ratios, time, and monetary values, but typically require multiple intermediate steps to solve, thereby placing moderate demands on the model's CoT reasoning ability. 
    \item MATH-500 is an evaluation subset derived from the MATH dataset, consisting of 500 representative mathematical problems. The problems span multiple mathematical domains, including algebra, geometry, number theory, and combinatorics, and typically require deep reasoning chains and rigorous intermediate steps to arrive at the correct answer.
    \item AMC 2023 contains 40 mathematical problems covering algebra, geometry, number theory, and combinatorics. Although each problem ultimately requires a single numeric answer, solving them typically involves multiple steps of logical reasoning, symbolic computation, and non-trivial mathematical insights, placing high demands on the model's reasoning coherence and problem decomposition abilities.
    \item AIME 2024 and AIME 2025 datasets are derived from the American Invitational Mathematics Examination (AIME), organized by the Mathematical Association of America (MAA). AIME serves as one of the advancement examinations in the AMC series and places high demands on mathematical reasoning and problem-solving skills. Each year, AIME consists of two contests (AIME I and AIME II), each comprising 15 fill-in-the-blank problems covering algebra, geometry, number theory, combinatorics, and other domains requiring deep reasoning. AIME 2024 and AIME 2025 each contain 30 problems and their solutions from AIME I and AIME II exams. Problems in both datasets require multi-step mathematical reasoning and logical decomposition, making them widely used to evaluate the performance of large language models on highly challenging mathematical tasks.
    \item OlympiadBench is a bilingual benchmark designed to evaluate the scientific reasoning abilities of large models at an Olympiad level, challenging state-of-the-art language and multimodal models with highly difficult mathematics and physics problems. The dataset comprises approximately 8,476 problems sourced from Olympiad-level mathematics and physics competitions as well as China's Gaokao exams, with expert step-by-step reasoning annotations provided for each problem to enable detailed analysis of model reasoning and comprehension. In our experimental evaluation, we select the same subset of 675 samples in LIMO \cite{limo} and DEER \cite{deer}, allowing for direct rule-based evaluation of the generated answers.
\end{itemize}

For the scientific reasoning benchmark, GPQA Diamond is the most challenging subset of the GPQA (Graduate‑Level Google‑Proof Q\&A) benchmark, designed to evaluate LLMs on advanced natural science reasoning tasks. While the full GPQA benchmark spans multiple expert-level domains, the Diamond subset specifically collects the hardest questions, typically covering STEM fields such as physics, chemistry, and biology. Each question has been validated by multiple domain experts, and even cross-disciplinary experts rarely achieve perfect scores, making it a rigorous test of reasoning ability. The subset contains 198 multiple-choice questions, each with a standard answer (sometimes accompanied by detailed explanations), allowing evaluation of models' performance in high-level scientific reasoning, knowledge integration, and cross-domain logical deduction. GPQA Diamond has become a key benchmark for assessing state-of-the-art LLMs in complex, academic reasoning scenarios, highlighting both their strengths and limitations.

For the code reasoning benchmarks, HumanEval \cite{humaneval} and LiveCodeBench \cite{livecode} are selected. 
\begin{itemize}
    \item HumanEval is a dataset consisting of a series of Python programming problems. Each problem includes a function signature, a descriptive specification, and several unit tests. Models are required to generate a complete code implementation that passes all the tests based on the problem description. The problems range from beginner to intermediate difficulty and are accompanied by executable tests, enabling strict verification of the generated code against the intended functionality. HumanEval evaluates multiple programming abilities, including semantic understanding, logical reasoning, control flow, and API usage. As such, it has become a standard benchmark for assessing large language models on function-level code generation and reasoning capabilities, and it has served as the foundation for subsequent datasets, such as MBPP and HumanEval+.
    \item LiveCodeBench is a dynamic benchmark designed to comprehensively evaluate the coding abilities of LLMs, addressing potential data leakage and overfitting issues present in traditional static benchmarks. Unlike fixed datasets such as HumanEval, LiveCodeBench continuously collects high-quality programming problems from major online competition platforms, including LeetCode, AtCoder, and Codeforces, with each problem annotated by its publication date to enable rigorous post-training evaluation. The benchmark covers not only traditional code generation tasks but also code self-repair, code execution, and test output prediction, providing a holistic assessment of model performance on real-world software development challenges. Following DEER \cite{deer}, our evaluation is based on LiveCodeBench-v5, which contains 880 programming problems collected from May 2023 to January 2025.
\end{itemize}
\subsection{Implementation Details} \label{app:3.2}
All evaluations are conducted in a Zero-shot CoT setting with the following prompt: \textit{``Please reason step by step, and put your final answer within \textbackslash boxed{}.''} The probing prompt ${I}$ is set to \textit{``\textbackslash n\textbackslash n Final Answer\textbackslash n\textbackslash n \textbackslash boxed''} and the jump prompt ${J}$ is set to \textit{``Wait, my previous reasoning is not correct. I should adopt a more concise and different approach to reexamine this problem.\textbackslash n\textbackslash n''}.

In Section \ref{sec:5.3}, we evaluate three jump prompt variants to investigate their impact on reasoning accuracy (Acc) and compression ratio (CR): $J_1$ denotes the default jump prompt $J$; $J_2$ represents \textit{``Wait, my initial reasoning may be incorrect; I need to reanalyze the problem.\textbackslash n\textbackslash n''}; $J_3$ corresponds to \textit{``Wait, let me reconsider this problem.\textbackslash n\textbackslash n''}.

Experiments are conducted using PyTorch 2.6.0 and Python 3.10. 
The computational hardware consists of the CPU of two 32-core Intel(R) Xeon(R) @ 2.80GHz and GPU of 8$\times$ NVIDIA A800, with CUDA 12.1 utilized for hardware acceleration.

\section{More Experimental Results} \label{app:4}

Table \ref{tab:main_results1} complements Table \ref{tab:main_results} by presenting the performance of Qwen3-32B and DeepSeek-R1-Distill-Llama-8B on four mathematical and one science reasoning benchmarks. Table \ref{tab:main_results2} illustrates the LRM performance on the other two mathematical reasoning and two code generation benchmarks.
\begin{table*}[h]
    \centering
    \caption{Additional results on Qwen3-32B and DeepSeek-R1-Distill-Llama-8B.}
    \resizebox{\linewidth}{!}{
    \begin{tabular}{l ccc ccc ccc ccc ccc|cc} 
    \toprule
    \multirow{2.5}{*}{\textbf{Method}} & \multicolumn{3}{c}{\textbf{GSM8K}} & \multicolumn{3}{c}{\textbf{MATH-500}} & \multicolumn{3}{c}{\textbf{AIME 2024}} & \multicolumn{3}{c}{\textbf{OlympiadBench}} & \multicolumn{3}{c}{\textbf{GPQA Diamond}}
    & \multicolumn{2}{c}{\textbf{AVG}} \\
    \cmidrule(lr){2-4} \cmidrule(lr){5-7} \cmidrule(lr){8-10} \cmidrule(lr){11-13} \cmidrule(lr){14-16} \cmidrule(lr){17-18}
     & Acc$\uparrow$ & Len$\downarrow$ & CR $\downarrow$ & Acc$\uparrow$ & Len$\downarrow$ & CR $\downarrow$ & Acc$\uparrow$ & Len$\downarrow$ & CR $\downarrow$ & Acc$\uparrow$ & Len$\downarrow$ & CR $\downarrow$ & Acc$\uparrow$ & Len$\downarrow$ & CR $\downarrow$ & Acc$\uparrow$ & CR $\downarrow$\\
    \midrule
    \multicolumn{18}{c}{{\cellcolor[rgb]{0.957,0.957,0.957}}\textit{\textbf{Qwen3-32B}}} \\
    Vanilla & 96.3 & 1,734 & 100\% & 94.4 & 4,412 & 100\% & 73.3 & 11,325 & 100\% & 62.7 & 6,497 & 100\% & 65.2 & 6,781 & 100\% & {78.4} & 100\%  \\
    \ours (\textbf{ours}) & 96.2 & 825 & 47.5\% & 95.2 & 3,320 & 75.2\% & 76.7 & 8,469 & 74.8\% & 68.3 & 5,064 & 77.9\% & 66.2 & 4,079 & 60.2\% & {\textbf{80.5}} & \textbf{67.1\%} \\
    \midrule
    \multicolumn{18}{c}{{\cellcolor[rgb]{0.957,0.957,0.957}}\textit{\textbf{DeepSeek-R1-Distill-Llama-8B}}} \\
    Vanilla & 89.1 & 1,514 & 100\% & 87.6 & 3,928 & 100\% & 40.0 & 13,472 & 100\% & 47.3 & 8,416 & 100\% & 26.3 & 9,655 & 100\% & 58.1 & 100\%  \\
    \ours (\textbf{ours}) & 90.5 & 1,137 & \text{75.1\%} & 88.6 & 2,458 & \text{62.6\%} & 46.7 & 9,236 & \text{68.6\%} & 50.2 & 5,219 & \text{62.0\%} & 30.3 & 5,632 & \text{58.3\%} & \textbf{61.3} & \textbf{65.3\%}\\
    \bottomrule
    \end{tabular}}
    \label{tab:main_results1}
\end{table*}

\begin{table*}[t]
    \centering
    \caption{Comparison across LRMs of varying scales. The results are reported on four reasoning benchmarks, including two mathematical reasoning and two code reasoning tasks. The best and the second best results are highlighted in \textbf{bold} and \underline{underline}, respectively.}
    \resizebox{\linewidth}{!}{
    \begin{tabular}{l ccc ccc ccc ccc|cc} 
    \toprule
    \multirow{2.5}{*}{\textbf{Method}} & \multicolumn{3}{c}{\textbf{AMC 2023}} & \multicolumn{3}{c}{\textbf{AIME 2025}} & \multicolumn{3}{c}{\textbf{HumanEval}} & \multicolumn{3}{c}{\textbf{LiveCodeBench}}
    & \multicolumn{2}{c}{\textbf{AVG}} \\
    \cmidrule(lr){2-4} \cmidrule(lr){5-7} \cmidrule(lr){8-10} \cmidrule(lr){11-13} \cmidrule(lr){14-15}
     & Acc$\uparrow$ & Len$\downarrow$ & CR $\downarrow$ & Acc$\uparrow$ & Len$\downarrow$ & CR $\downarrow$ & Acc$\uparrow$ & Len$\downarrow$ & CR $\downarrow$ & Acc$\uparrow$ & Len$\downarrow$ & CR $\downarrow$ & Acc$\uparrow$ & CR $\downarrow$\\
    \midrule
    \multicolumn{15}{c}{{\cellcolor[rgb]{0.957,0.957,0.957}}\textit{\textbf{Qwen3-4B}}} \\
    
    Vanilla & 87.5 & 7,512 & 100\% & 46.7 & 12,774 & 100\% & 91.5 & 3,821 & 100\% & 64.2 & 8,705 & 100\% & 72.5 & 100\% \\
    \ours (\textbf{ours}) & 90.0 & 5,230 & 69.6\% & 53.3 & 11,490 & 89.9\% & 92.7 & 1,249 & 32.7\% & 65.4 & 5,369 & 61.7\% & \textbf{75.4} & \textbf{63.5\%}  \\
    \midrule
    \multicolumn{15}{c}{{\cellcolor[rgb]{0.957,0.957,0.957}}\textit{\textbf{Qwen3-8B}}} \\
    Vanilla & 87.5 & 8,045 & 100\% & 53.3 & 13,041 & 100\% & 90.2 & 3,812 & 100\% & 64.7 & 8,923 & 100\% & 73.9 & 100\%  \\
    \ours (\textbf{ours}) & 92.5 & 4,285 & 53.3\% & 60.0 & 11,826 & \text{90.7\%} & 92.7 & 1,002 & 26.3\% & 66.1 & 4,561 & 51.1\% & \textbf{77.8} & \textbf{55.4\%}\\
    \midrule
    \multicolumn{15}{c}{{\cellcolor[rgb]{0.957,0.957,0.957}}\textit{\textbf{Qwen3-14B}}} \\
    Vanilla & 92.5 & 7,214 & 100\% & 60.0 & 12,451 & 100\% & 92.7 & 3,198 & 100\% & 72.6 & 8,112 & 100\%  & 79.5 & 100\% \\
    \ours (\textbf{ours}) & 95.0 & 4,389 & 60.8\% & 63.3 & 11,290 & \text{90.7\%} & 93.9 & 1,025 & 32.1\% & 74.8 & 4,926 & 60.7\%  & \textbf{81.8} & \textbf{61.1\%}\\
    \midrule
    \multicolumn{15}{c}{{\cellcolor[rgb]{0.957,0.957,0.957}}\textit{\textbf{Qwen3-32B}}} \\
    Vanilla & 95.0 & 7,711 & 100\% & 66.7 & 12,398 & 100\% & 93.3 & 3,216 & 100\% & 74.1 & 7,521 & 100\%  & 82.3 & 100\%  \\
    \ours (\textbf{ours}) & 97.5 & 4,830 & 62.6\% & 66.7 & 9,211 & 74.3\% & 93.9 & 1,126 & \text{35.0\%} & 75.3 & 4,745 & \text{63.1\%} & \textbf{83.4} & \textbf{58.8\%} \\
    \midrule
    \multicolumn{15}{c}{{\cellcolor[rgb]{0.957,0.957,0.957}}\textit{\textbf{DeepSeek-R1-Distill-Qwen-7B}}} \\
    Vanilla & 77.5 & 6,841 & 100\% & 26.7 & 11,861 & 100\% & 78.6 & 5,739 & 100\% & 38.4 & 10,516 & 100\% & 55.3 & 100\%  \\  
    \ours (\textbf{ours}) & 82.5 & 4,312 & 63.0\% & 36.7 & 7,547 & \text{63.6\%} & 78.6 & 1,257 & 21.9\% & 40.3 & 2,945 & 28.0\% & \textbf{59.5} & \textbf{44.1\%} \\
    \multicolumn{15}{c}{{\cellcolor[rgb]{0.957,0.957,0.957}}\textit{\textbf{DeepSeek-R1-Distill-Llama-8B}}} \\
    Vanilla & 72.5 & 6,482 & 100\% & 26.7 & 12,142 & 100\% & 80.5 & 6,214 & 100\% & 41.5 & 11,029 & 100\% & 55.3 & 100\%  \\  
    \ours (\textbf{ours}) & 80.0 & 4,087 & 63.1\% & 33.3 & 7,721 & 63.6\% & 82.3 & 1,465 & 23.6\% & 43.8 & 3,314 & 30.0\% & \textbf{59.9} & \textbf{45.1\%} \\
    \bottomrule
    \end{tabular}}
    \label{tab:main_results2}
\end{table*}

\section{More Ablation Studies} \label{app:5}
\begin{figure*}[h]
    \centering
    \includegraphics[width=\textwidth]{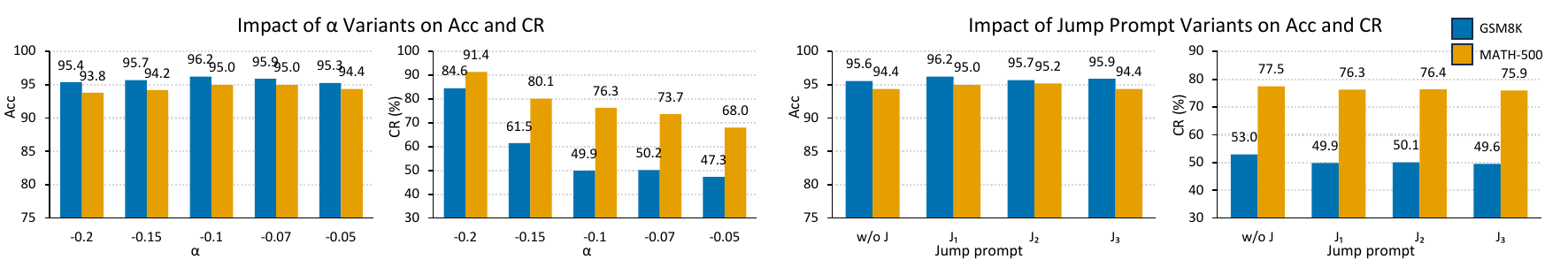}
    \caption{Impact of $\alpha$ variants (left) and jump prompt variants (right) evaluated on Qwen3-14B.}
    \label{fig:abla1}
\end{figure*}
More experimental results on $\alpha$ and jump prompt variants are presented in Figure \ref{fig:abla1}, further demonstrating setting $\alpha$ to $-0.1$ achieves a balance between accuracy and efficiency, and that jump prompts can help LRMs escape thinking traps.

\section{Relationship between Attention Entropy and Confidence Signals}
\begin{figure}[t]
    \centering
    \includegraphics[width=.6\linewidth]{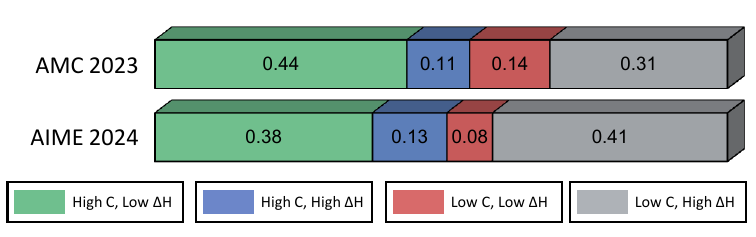}
    \caption{Paired $(H, \mathcal{C})$ values at each ATP on AMC 2023 and AIME 2024 evaluated with Qwen3-8B. The numerical values indicate the proportion of samples falling into each category, and the green region denotes converged and reliable reasoning states. High $\mathcal{C}$ and low $\Delta H$ represent $\mathcal{C} > \lambda$ and $\Delta H <\alpha$, respectively.}
    \label{fig:ch_ana}
\end{figure}
To investigate the synergy and potential divergence between internal attention entropy and external confidence signals, we conduct a decoupling analysis between attention entropy $H$ and confidence $\mathcal{C}$. 

First, we collect paired $(H, \mathcal{C})$ values at every Action Transition Point (ATP) across AMC 2023 (representing easy tasks) and AIME 2024 (representing challenging tasks) using Qwen3-8B. Confidence $\mathcal{C}$ is calculated as the mean token probability of the probed intermediate answer following Equation \eqref{eq:confidence}, and the entropy variation rate $\Delta H$ is computed as the rate of change in attention entropy between first ATP and current ATP. 

We then categorize all ATP observations into four regimes based on the empirical thresholds $\lambda=0.95$ and $\alpha=-0.1$. The visualization of paired $(H, \mathcal{C})$ values is shown in Figure \ref{fig:ch_ana}.
(1) High $\mathcal{C}$ and low $\Delta H$. High confidence is typically associated with a low entropy variation rate, indicating that as the reasoning process converges, internal attention becomes more concentrated while output confidence increases. In addition, we quantitatively compute the Pearson $\rho$ correlation coefficient between $\mathcal{C}$ and $\Delta H$ on two tasks to roughly examine whether a systematic relationship exists between the two variables. The results show that the Pearson correlation on the relatively easier AMC 2023 ($\rho$=-0.71) is stronger than that on the more challenging AIME 2024 ($\rho$=-0.38). This discrepancy suggests that as task difficulty increases, the alignment between internal stability and external confidence becomes progressively decoupled. 
(2) High $\mathcal{C}$ and high $\Delta H$. Regarding the ATPs where the LRM exhibit overconfidence, while the token-level logits distribution is sharp, the attention remains diffuse and exploratory indicating that the model's certainty is superficial and lacks a stable evidentiary foundation. In this scenario, \ours utilizes the high entropy signal as a ``stability brake,'' overriding the deceptive confidence signal to prevent a premature and incorrect early exit. 
(3) Low $\mathcal{C}$ and low $\Delta H$. This scenario indicates attention convergence, where the LRM has internally formed a stable reasoning trajectory but still exhibits hesitation at token outputs. This behavior may arise because, although attention weights are highly concentrated on key evidence, the LRM retains strong ``cognitive inertia'' and thus prefers to continue generation rather than committing to a conclusion, resulting in relatively flat output logits. \ours identifies this state as internal stability and leverages logits injection to guide the model toward a conclusion, thereby substantially improving efficiency without sacrificing accuracy.
(4) Low $\mathcal{C}$ and high $\Delta H$. This case corresponds to the slow thinking reasoning stage. At this stage, the model remains in an exploratory state, with attention broadly dispersed to integrate diverse information, and the output remains uncertain. \ours allows the LRM to continue extended reasoning until a more stable information flow is established. This design ensures that the model is allocated sufficient computational resources to solve complex tasks that require sustained and deliberate reasoning.

\section{Future Work and Discussions}
A particularly promising direction for future research is the integration of internal attentional dynamics into the reinforcement learning (RL) framework for training reasoning models. While current large reasoning models (LRMs) primarily rely on outcome-based rewards or sparse rule-based verifiers, our findings suggest that attention dynamics can serve as a robust signal for process-based supervision. Recent work has also shown that manipulating internal representation dynamics can systematically steer model behavior and reasoning trajectories, highlighting the importance of understanding hidden-state evolution during inference \cite{steering}. Future work could explore designing ``attention-aware reward functions'' that incentivize LRMs to reach reasoning convergence more efficiently. By penalizing redundant reasoning paths where attention remains highly dispersed and rewarding trajectories that exhibit clear entropy drops, LRMs could learn to internalize self-correction and optimal termination strategies during the training phase, fundamentally mitigating ``overthinking'' at its source rather than relying solely on inference-time interventions. Furthermore, we aim to extend the principles of \ours to more sophisticated search-based inference algorithms. In frameworks such as Monte Carlo Tree Search (MCTS) or Tree-of-Thought (ToT), attention-state metrics could be utilized as a heuristic for dynamic branch pruning, allowing the system to reallocate computational budget from redundant paths to more promising logical directions. These advancements would move the field closer to ``System 2'' models that not only possess deep reasoning capabilities but also an internal understanding of their own cognitive progress.

This work also has broad implications across multiple domains. In graph reasoning, GraphCogent \cite{graphcogent} can leverage our attention-state monitoring to guide reasoning guide through attention distribution analysis. In data distillation, \ours and AT-BPTT \cite{li2026beyond} share the principle of using internal model dynamics for adaptive computation control. For GraphRAG, CS-RAG \cite{ma2026mitigating} and NeuroPath \cite{neuropath} can integrate our mechanism to eliminate redundant search by determining when sufficient information is retrieved. In adversarial defense \cite{kepo, cbv}, \ours provides a novel perspective for detecting adversarial examples through anomalous attention patterns and identifying abnormal reasoning states under malicious inputs. These applications demonstrate that attention dynamics represent a general paradigm for assessing model cognitive states beyond isolated optimization.

\section{Theoretical Justification for Incorporating Attention Dynamics}
\begin{lemma}[Entropy–Stability bound]
\label{lem:entropy-stability}
Consider a single-head self-attention update of a hidden state \(h\in\mathbb{R}^d\)
\[
h' = \text{Softmax}(z)\,V,\qquad z=\frac{QK^\top}{\sqrt{d}},\qquad q=W_Q h,
\]
and denote the attention vector \(\alpha=\text{Softmax}(z)\) with \(\alpha_{\max}=\max_i \alpha_i\).
Let \(J\coloneqq \frac{\partial z}{\partial h}\). Suppose there exist constants \(B_J,B_V>0\) such that
\(\|J\|_2\le B_J\) and \(\|V\|_2\le B_V\). Then the Jacobian of the layer satisfies the spectral-norm bound
\[
\|\nabla_h h'\|_2 \le 2\,\alpha_{\max}(1-\alpha_{\max})\,B_JB_V.
\]
Moreover, if \(H(\alpha)\) denotes the Shannon entropy of \(\alpha\) (natural logarithm),
\(\;H(\alpha)\to 0\) implies \(\alpha_{\max}\to 1\) and hence the right-hand side above tends to \(0\).
\end{lemma}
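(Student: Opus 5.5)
The plan is to compute the Jacobian of \(h'=\alpha^\top V\) by the chain rule, treating \(V\) as fixed (the statement bounds only \(\|V\|_2\)). Then bound the spectral norm of the softmax Jacobian in terms of \(\alpha_{\max}\), and finally relate the Shannon entropy to \(\alpha_{\max}\) through the min-entropy.

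\textbf{Step 1 (chain rule).} Write \(S\coloneqq \partial\alpha/\partial z=\mathrm{diag}(\alpha)-\alpha\alpha^\top\), the standard softmax Jacobian. Then
\[
\nabla_h h' = V^\top S\,J,
\]
and submultiplicativity gives \(\|\nabla_h h'\|_2\le B_V\,\|S\|_2\,B_J\). It therefore suffices to show \(\|S\|_2\le 2\alpha_{\max}(1-\alpha_{\max})\).

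\textbf{Step 2 (softmax Jacobian).} This is the step I expect to be the main obstacle. \(S\) is symmetric positive semidefinite, so \(\|S\|_2\) equals its largest eigenvalue.

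I would apply Gershgorin's theorem to \(S\). Row \(i\) has diagonal entry \(\alpha_i(1-\alpha_i)\), and its off-diagonal absolute sum is \(\alpha_i\sum_{j\ne i}\alpha_j=\alpha_i(1-\alpha_i)\). Hence every eigenvalue lies in \([0,\,2\max_i\alpha_i(1-\alpha_i)]\).

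It remains to show \(\max_i \alpha_i(1-\alpha_i)=\alpha_{\max}(1-\alpha_{\max})\), using that \(t\mapsto t(1-t)\) is increasing on \([0,1/2]\) and symmetric about \(1/2\). I would split into two cases.
\begin{itemize}
\item If \(\alpha_{\max}\le 1/2\), every \(\alpha_i\le\alpha_{\max}\le 1/2\), and monotonicity gives the claim.
\item If \(\alpha_{\max}>1/2\), every other coordinate satisfies \(\alpha_i\le 1-\alpha_{\max}<1/2\). Then \(\alpha_i(1-\alpha_i)\le(1-\alpha_{\max})\alpha_{\max}\), again by monotonicity.
\end{itemize}
Combining the two cases with Step 1 yields the stated Jacobian bound.

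\textbf{Step 3 (entropy implies concentration).} Since \(H(\alpha)=\sum_i\alpha_i(-\log\alpha_i)\ge\sum_i\alpha_i(-\log\alpha_{\max})\), we get
\[
H(\alpha)\ge-\log\alpha_{\max}\ge 1-\alpha_{\max}.
\]
Hence \(H(\alpha)\to0\) forces \(\alpha_{\max}\to1\). Then \(2\alpha_{\max}(1-\alpha_{\max})B_JB_V\le 2H(\alpha)B_JB_V\to0\), which also gives a quantitative rate.
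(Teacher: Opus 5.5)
Your proof is correct and follows essentially the same route as the paper: the chain rule, the row-sum identity $\sum_j |S_{ij}| = 2\alpha_i(1-\alpha_i)$ for the softmax Jacobian, and the bound $H(\alpha)\ge -\log\alpha_{\max}$. The differences are minor and work in your favor. You use Gershgorin with symmetry where the paper uses $\|D\|_2\le\sqrt{\|D\|_1\|D\|_\infty}$, and you write the dimensionally correct $V^\top S J$. You also prove $\max_i\alpha_i(1-\alpha_i)=\alpha_{\max}(1-\alpha_{\max})$ by cases, which the paper only asserts, and you add the explicit rate $2\alpha_{\max}(1-\alpha_{\max})\le 2H(\alpha)$, which the paper does not give.
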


\begin{proof}
Write the attention derivative matrix $D \coloneqq \frac{\partial \alpha}{\partial z} = \operatorname{diag}(\alpha) - \alpha\alpha^\top$.
By the chain rule, $\nabla_h h' = D \, J \, V$.
Taking operator norms and using submultiplicativity yields
\[
\|\nabla_h h'\|_2 \le \|D\|_2 \,\|J\|_2\,\|V\|_2 \le \|D\|_2 \, B_J B_V.
\]
It remains to bound \(\|D\|_2\). The entries of \(D\) satisfy
\[
D_{ii}=\alpha_i(1-\alpha_i),\qquad D_{ij}=-\alpha_i\alpha_j\ (i\ne j).
\]
For any row \(i\),
\[
\sum_j |D_{ij}| = \alpha_i(1-\alpha_i) + \sum_{j\ne i}\alpha_i\alpha_j
= 2\alpha_i(1-\alpha_i).
\]
Therefore the maximum absolute row sum (the \(\infty\)-norm) is bounded by
\(\|D\|_\infty \le 2\max_i \alpha_i(1-\alpha_i)=2\alpha_{\max}(1-\alpha_{\max})\).
By symmetry the same bound holds for the column-sum norm \(\|D\|_1\). Using the standard inequality
\(\|D\|_2 \le \sqrt{\|D\|_1\|D\|_\infty}\) we obtain
\[
\|D\|_2 \le 2\alpha_{\max}(1-\alpha_{\max}).
\]
Combining the above bounds gives
\[
\|\nabla_h h'\|_2 \le 2\,\alpha_{\max}(1-\alpha_{\max})\,B_JB_V,
\]
which proves the stated spectral-norm bound.

Finally, relate entropy to \(\alpha_{\max}\). Since for every \(i\) we have \(-\log\alpha_i \ge -\log\alpha_{\max}\),
the Shannon entropy (natural logarithm) satisfies
\[
H(\alpha)=\sum_i \alpha_i(-\log\alpha_i) \ge -\log\alpha_{\max}.
\]
Hence \(\alpha_{\max}\ge e^{-H(\alpha)}\). Therefore \(H(\alpha)\to 0\) implies \(\alpha_{\max}\to 1\),
and consequently \(\alpha_{\max}(1-\alpha_{\max})\to 0\), which makes the right-hand side of the spectral bound tend to zero.
\end{proof}

This lemma provides a direct quantitative link between attention concentration and the model's stability. From a reasoning perspective, the reduction of Shannon entropy $H(\alpha)$ serves as a mechanism for state transition: as the distribution concentrates $(\alpha_{\max}\to 1)$, the model effectively identifies specific reasoning states by filtering out noise from irrelevant tokens. 

Mathematically, this concentration forces the Jacobian factor $D$ to shrink, promoting a global contraction. According to Banach’s Fixed-Point Theorem, such a contraction is essential for iterative reasoning, as it ensures the model's internal representations converge toward a stable, unique logical conclusion (a fixed point) rather than oscillating or diverging. Therefore, minimizing entropy is not just a side effect but a structural necessity for the model to lock in on a clear state, thereby enabling robust multi-step inference.

\section{Case Study}

\begin{figure}
    \centering
    \includegraphics[width=\linewidth]{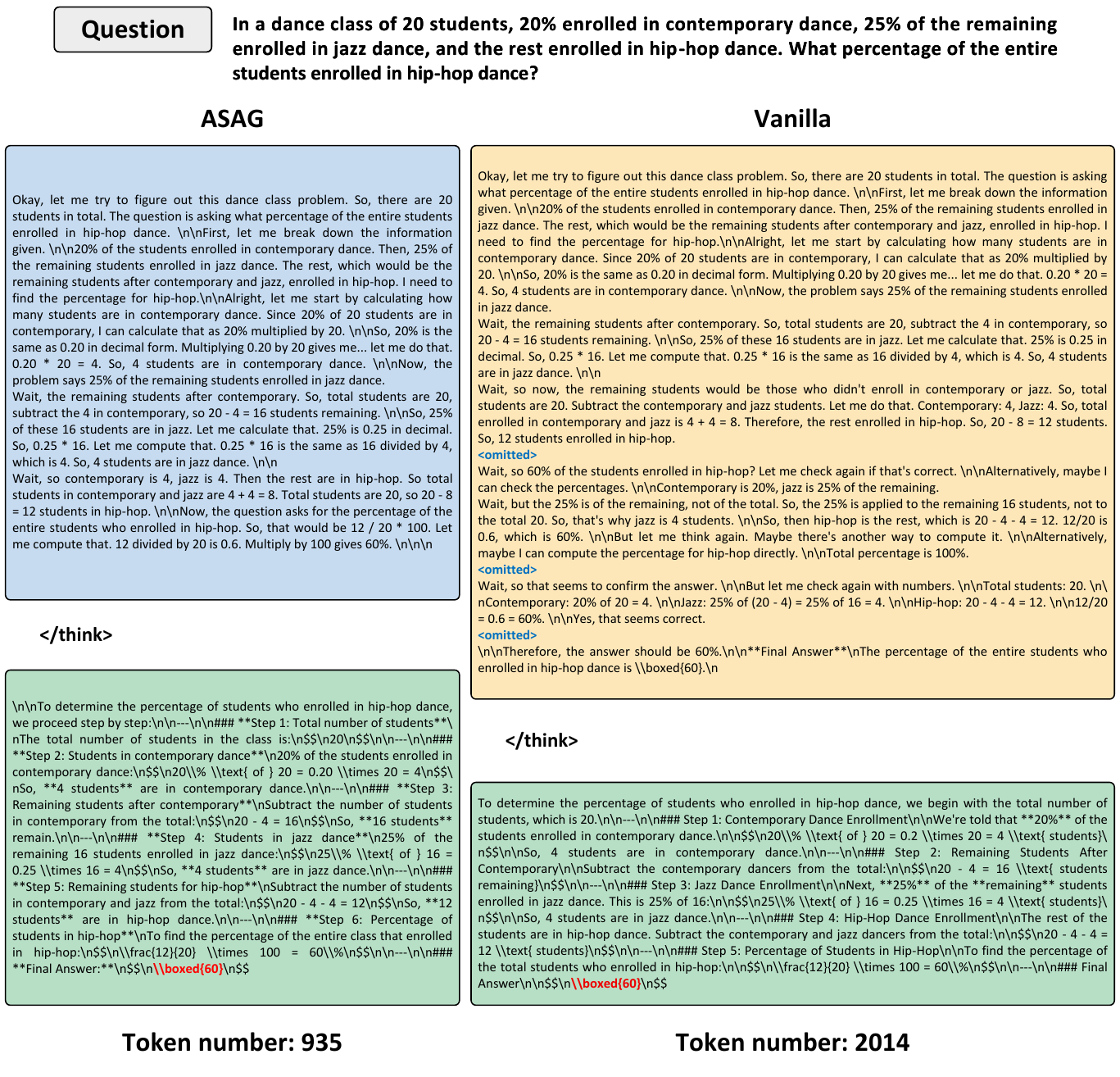}
    \caption{Comparison of the generated results between \ours and vanilla CoT on a sample from GSM8K. Both methods arrive at the correct final answer; however, \ours terminates early by evaluating model confidence and attention entropy, thereby significantly improving reasoning efficiency. Blue and yellow parts represent slow thinking stage; green part denotes conclusion stage.}
    \label{fig:case1}
\end{figure}

\begin{figure}
    \centering
    \includegraphics[width=\linewidth]{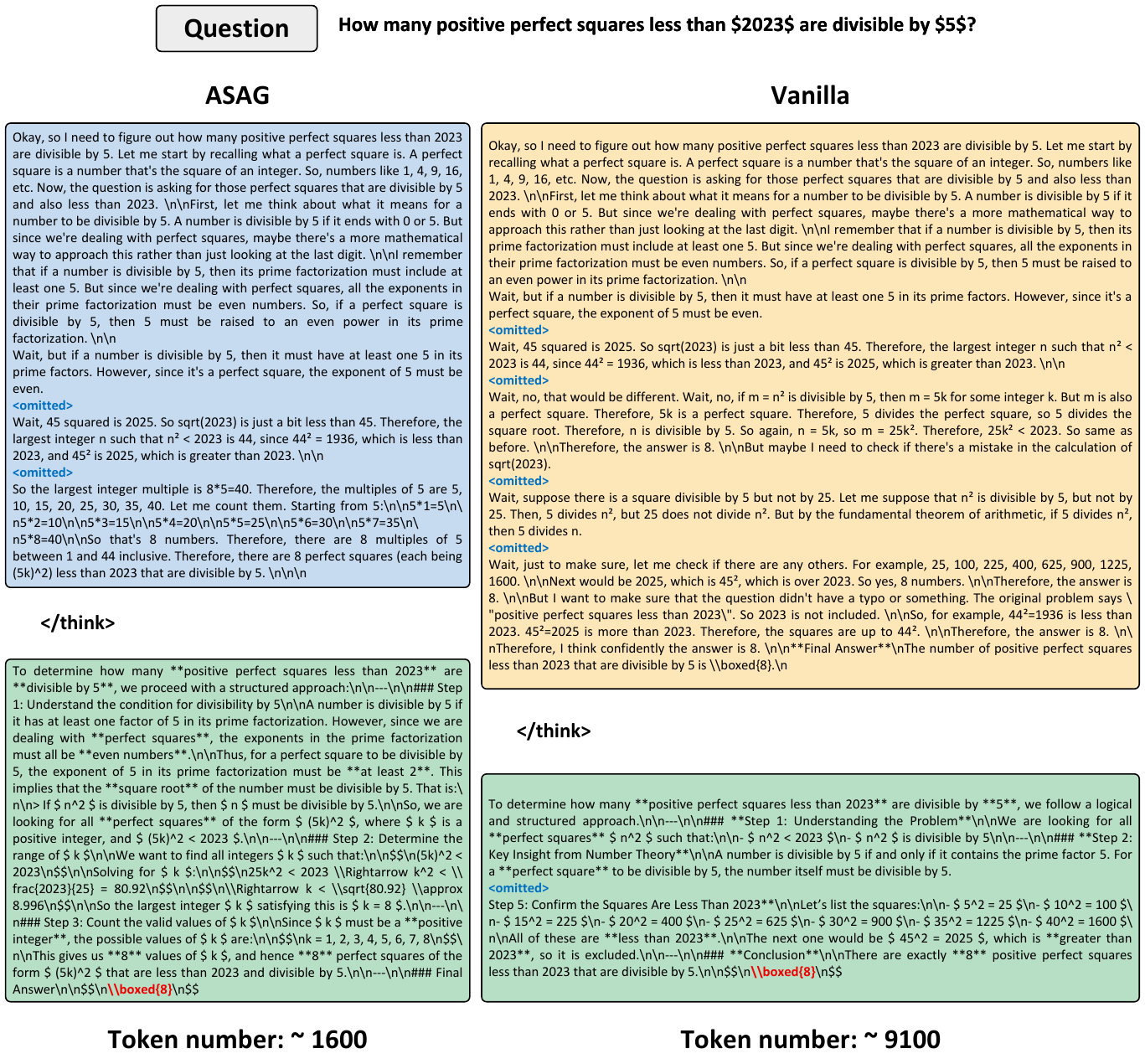}
    \caption{Comparison of the generated results between \ours and vanilla CoT on a sample from AMC 2023. Compared to the example in Figure \ref{fig:case1}, this problem is moderately more challenging and requires a greater number of reasoning steps, exhibiting a larger efficiency gap between \ours and the vanilla model.}
    \label{fig:case2}
\end{figure}

\begin{figure}
    \centering
    \includegraphics[width=\linewidth]{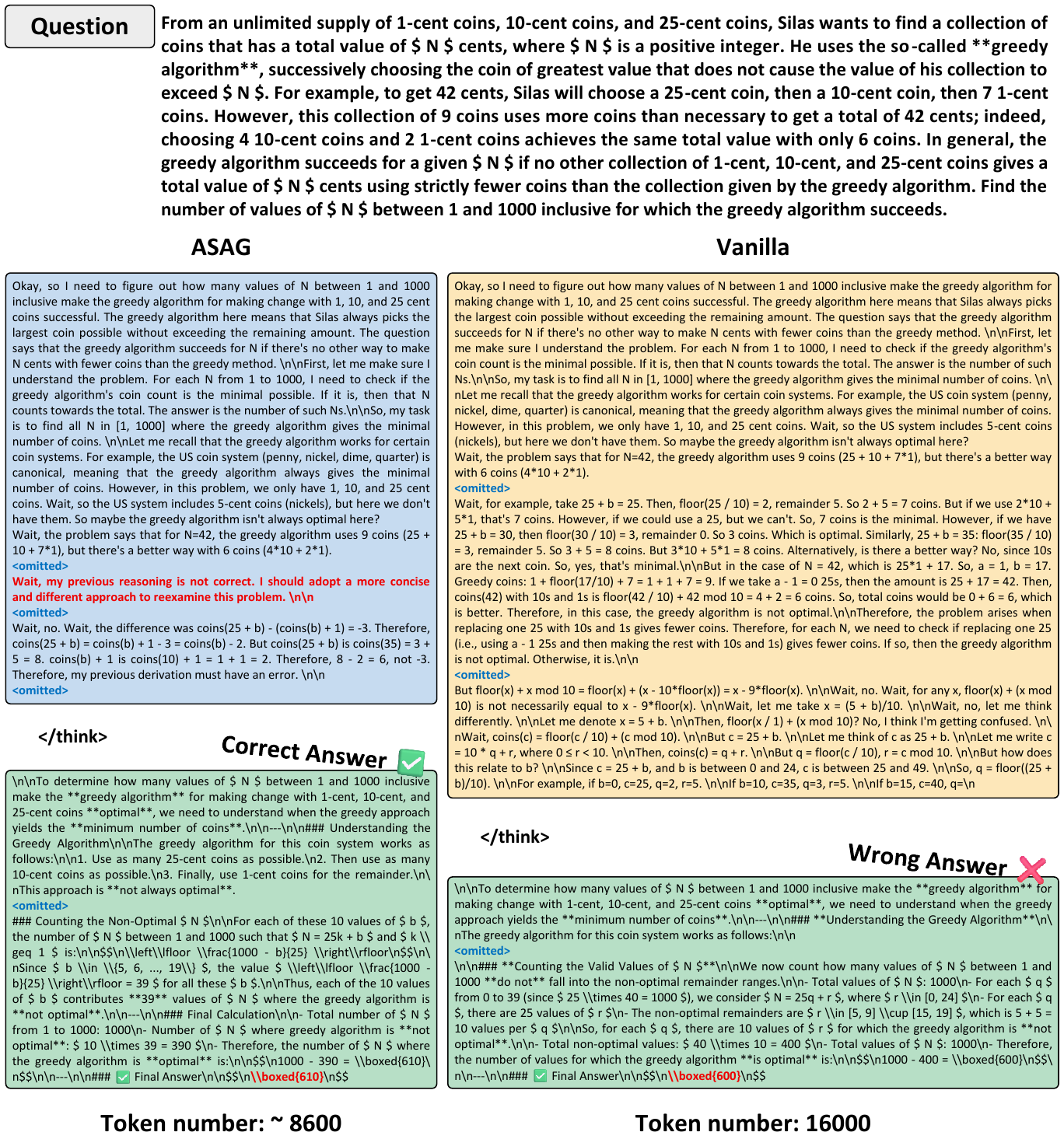}
    \caption{Comparison of the generated results between \ours and vanilla CoT on a sample from AIME 2025. AIME 2025 is one of the most challenging benchmarks. \ours detects that the LRM becomes stuck in prolonged reasoning, i.e., a thinking trap, and accordingly adds a jump prompt during the reasoning process. This intervention enables the model to reach the correct answer with substantially fewer generated tokens, whereas the vanilla model produces an incorrect result. We hypothesize that, during extended reasoning, the model may have already accumulated partial progress; when further reasoning fails to yield new insights, an early and timely interruption can be beneficial for generating the correct answer. }
    \label{fig:case3}
\end{figure}


\end{document}